\pgfplotsset{compat=1.5}
\pgfplotsset{grid style={dashed,gray}}
\def\Red#1{\textcolor{red}{#1}}
\newtheorem{theorem}{Theorem}
\newtheorem{problem}[theorem]{Problem}
\newcommand{\timme}{t}
\newcommand{\disc}{k}
\newcommand{\volume}{\mathcal{V}}
\newcommand{\occupiedspace}{\volume_{\mathrm{occ}}}
\newcommand{\occupiedspaceMem}{\volume^*_{\mathrm{occ}}}
\newcommand{\freespace}{\volume^*_{\mathrm{free}}}
\newcommand{\navigationspace}{\volume_{\mathrm{nav}} }
\newcommand{\castspace}{\volume^*_{\mathrm{cst}}}
\newcommand{\pocc}{P_{\text{occ}}}
\newcommand{\pray}{P_{\text{ray}}}
\newcommand{\state}{z}
\newcommand{\start}{i}
\newcommand{\final}{f}
\newcommand{\current}{c}
\newcommand{\Ball}{\mathcal{B}}
\newcommand{\position}{p}
\newcommand{\velocity}{v}
\newcommand{\acceleration}{a}
\newcommand{\jerk}{j}
\newcommand{\snap}{\zeta}
\newcommand{\saferadious}{R}
\newcommand{\yaw}{\psi}
\newcommand{\angvelocity}{\omega}
\newcommand{\angacceleration}{\alpha}
\newcommand{\frustum}{\mathcal{F}}
\newcommand{\framerate}{f_{\text{cam}}}
\newcommand{\maplevel}{l}
\newcommand{\voxelrad}{r}
\newcommand{\maxlevel}{L}
\newcommand{\console}[1]{\texttt{\small #1}}
\newcommand{\voxel}{v}
\newcommand{\waypoints}{W}
\newcommand{\Tree}{T}
\newcommand{\omegamax}{\Omega_{\mbox{\scriptsize max}}}
\newcommand{\vmax}{V_{\mbox{\scriptsize max}}}
\newcommand{\amax}{A_{\mbox{\scriptsize max}}}
\newcommand{\jmax}{J_{\mbox{\scriptsize max}}}
\newcommand{\ellipse}{\mathcal{E}}
\newcommand{\sallowed}{\mathcal{Z}_\text{safe}}
\title{
Autonomous Navigation of MAVs in Unknown Cluttered Environments
}
 \author{Leobardo~Campos-Macías, Rodrigo~Aldana-López, Rafael~de~la~Guardia, José~I.~Parra-Vilchis and David~Gómez-Gutiérrez \footnote{\Red{This is the preprint version of the accepted Manuscript: L.~Campos-Macías, R.~Aldana-López, R.~de~la~Guardia, J.~I.~Parra-Vilchis and D.~Gómez-Gutiérrez , ``Autonomous Navigation of MAVs in Unknown Cluttered Environments”, Journal of Field Robotics, 2020, ISSN: 1556-4967. DOI. 10.1002/rob.21959.
Please cite the publisher's version. For the publisher's version and full citation details see:
\url{https://doi.org/10.1002/rob.21959}.
}}

 \thanks{The authors are with the Multi-Agent Autonomous Systems Lab, Intel Labs, Intel Tecnología de México. L.~Campos-Macías is also with Cinvestav, Unidad Guadalajara. D.~Gómez-Gutiérrez is also with Tecnologico de Monterrey, Escuela de Ingeniería y Ciencias. (e-mail:lecampos@gdl.cinvestav.mx; rodrigo.aldana.lopez@gmail.com; rafael.de.la.guardia@intel.com; jose.i.parra.vilchis@intel.com, david.gomez.g@ieee.org).}
\thanks{Corresponding Author: D.~Gómez-Gutiérrez}
}
\begin{document}

\maketitle

\overrideIEEEmargins
\begin{abstract}
This paper presents an autonomous navigation framework for reaching a goal in unknown 3D cluttered environments. The framework consists of three main components. First, a computationally efficient method for mapping the environment from the disparity measurements obtained from a depth sensor. Second, a stochastic method to generate a path to a given goal, taking into account field of view constraints on the space that is assumed to be safe for navigation. Third, a fast method for the online generation of motion plans, taking into account the robot's dynamic constraints, model and environmental uncertainty and disturbances. To highlight the contribution with respect to the available literature, we provide a qualitative and quantitative comparison with state of the art methods for reaching a goal and for exploration in unknown environments, showing the superior performance of our approach. To illustrate the effectiveness of the proposed framework, we present experiments in multiple indoors and outdoors environments running the algorithm fully on board and in real-time, using a robotic platform based on the Intel Ready to Fly drone kit, which represents the implementation in the most frugal platform for navigation in unknown cluttered environments demonstrated to date. Open source code is available at:~\url{https://github.com/IntelLabs/autonomousmavs}. The video of the experimental results can be found at~\url{https://youtu.be/Wq0e7vF6nZM}

\end{abstract}

\begin{IEEEkeywords}
aerial systems: perception and autonomy, autonomous vehicle navigation, collision avoidance, visual based navigation, navigation in unknown environments
\end{IEEEkeywords}
\section{Introduction}
Autonomous navigation in unknown cluttered environments is one of the fundamental problems in robotics with applications in search and rescue, information gathering and inspection of industrial and civil structures, among others. Multirotor Micro Aerial Vehicles (MAVs) are an ideal robotic platform for many of these tasks due to their small size and high maneuverability. MAVs have also been demonstrated to be powerful enough platforms to operate autonomously in GPS-denied environments using only onboard processing and sensing. In many practical use cases, navigation is goal-oriented, meaning that there is a premium on reaching a specific target location as directly as possible, for example, while minimizing time, energy or total distance traveled by the MAV. 

Since the environment is unknown, building a map is an important part of the solution as it provides a world representation to the robot that may include metric, topological and semantic information needed by the robot to plan trajectories towards its global goal. Continuous advances in computing hardware and software, availability of powerful sensors, and algorithmic developments~\cite{Cadena2016PastAge}, enable maps to be processed fast enough to also be useful for collision avoidance with objects entering the field of view of the robot. As the map is created and new areas are discovered, online planning is required as it helps the robot identify shortcuts and escape local minima. Such planning algorithms must take into account the dynamical constraints of the robot to guarantee that the trajectories planned can be executed.

Although mapping, planning, and trajectory generation can be considered mature fields considering certain combinations of robotic platforms and environments, a framework combining elements from all these fields for MAV navigation in general environments is still missing. Recent surveys on classical approaches for mapping can be found in \cite{Cadena2016PastAge}, for planning in \cite{Tsardoulias2016ADensity} and for robotic navigation in general in \cite{Thrun2006ProbabilisticRobotics}.

 \begin{figure}[t]
     \centering
     \includegraphics[scale=0.16]{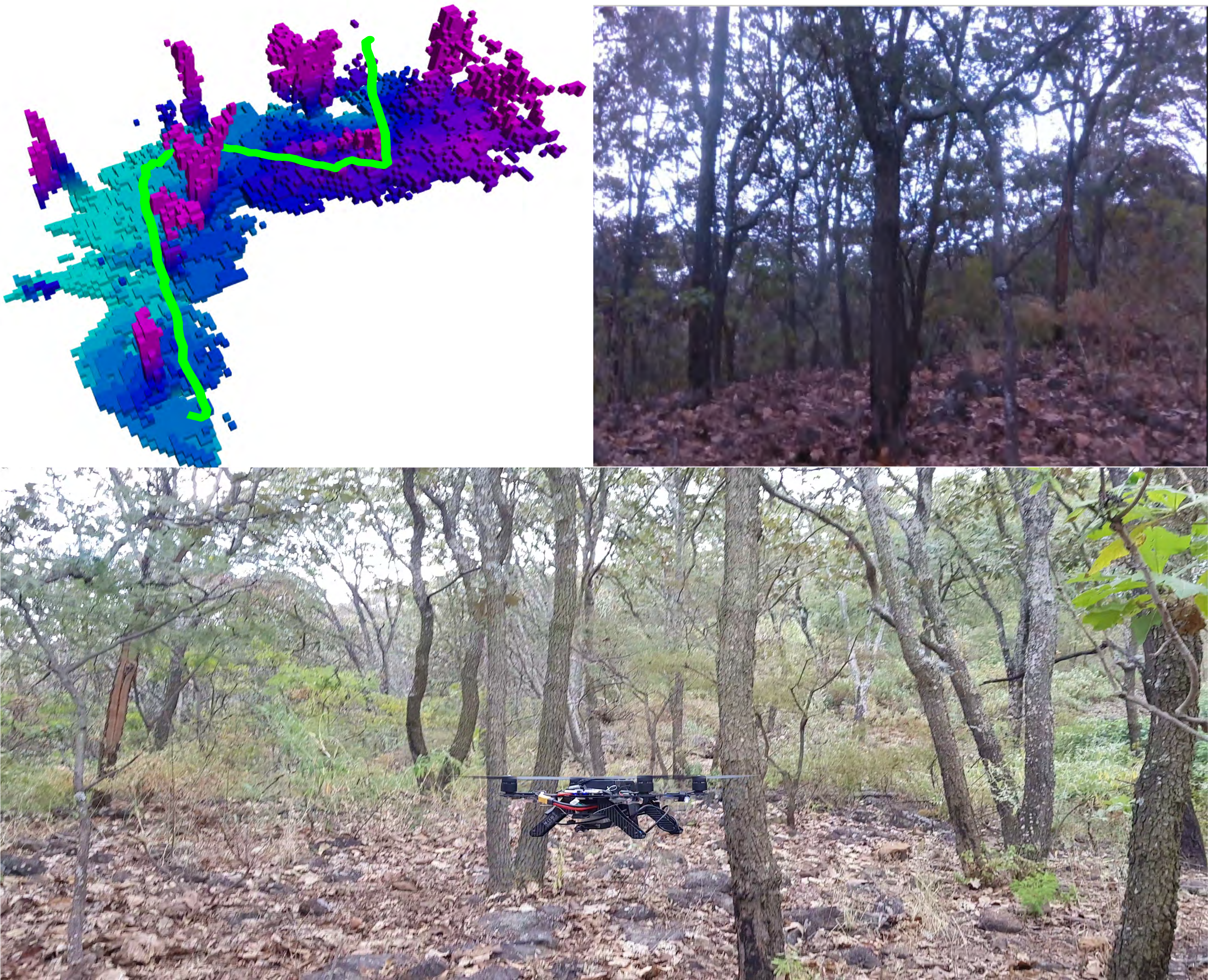}
     \caption{Reaching a goal in a forest environment: The top left image shows the map representation generated together with the generated trajectory. The top right image shows the camera robot footage. The bottom image shows a picture of the robot in the forest performing the experiment.}
     \label{fig:mainfig}
 \end{figure}

\subsection{Contributions}
In this paper, we present a framework for MAV navigation in complex, unknown 3D environments. This framework enables robust autonomous navigation in resource-constrained platforms that, using previous methods, would not be able to navigate in cluttered unknown environments. 

It consists of three main components. First, a computationally efficient method for mapping the environment from the disparity measurements obtained from an RGB-D camera. Second, a stochastic method to generate a path to a goal located on the map, taking into account field of view constraints on the space that is assumed to be safe for navigation. This method is efficient because it plans only in three-dimensional space and does not require taking into account high order dynamics of the robot which are handled in the motion planning method. In addition, in order to detect collisions, the path planning exploits the structure defined by our map representation for fast collision checking. Third, a lightweight method for the online generation of motion plans over a three-dimensional path, taking into account the MAV's dynamic constraints, model and environmental uncertainty, and disturbances. Contrary to existing approaches, this method does not require the use of any optimization solver, making it suitable for online planning, even in platforms with low computing power. In addition, the combination of these three main components allows our system to reach a global goal without getting stuck in dead ends. To demonstrate the efficiency of our method, experiments in multiple indoors and outdoors environments, like the ones shown in Figure~\ref{fig:mainfig}, were implemented in a MAV platform equipped with a low power microprocessor. It is important to highlight that existing implementations solving similar problems require
high-end general purpose and graphics processors and significantly higher compute resources.


\subsection{Related work}

The papers~\cite{Shen2011AutonomousMAV,Fraundorfer2012Vision-basedMAV} are among the earliest works demonstrating fully autonomous MAV navigation. The authors in~\cite{Shen2011AutonomousMAV} presented a system that enables autonomous navigation for multi-floor mapping with all the processing onboard a MAV including loop closure, localization, planning, and control. However, the map was constructed using a simplified incremental SLAM algorithm assuming 2.5D environment models formed by vertical walls and horizontal ground planes. The work in~\cite{Fraundorfer2012Vision-basedMAV} demonstrated the feasibility of autonomous mapping and exploration for a quadrotor MAV with a  forward-looking stereo camera as its main sensor. The MAV was capable of path planning and exploration, moving autonomously from one point to another with collision avoidance. However, global planning was done using frontier-based exploration applied on a 2D slice of the current occupancy grid map.

More recent papers have continued the strategy of using refinements on the frontiers-based algorithm~\cite{Yamauchi1997AExploration} for global exploration, combined with some variation of the next best view algorithm~\cite{Connolly1985TheViews} for local planning. 
The paper~\cite{Cieslewski2017RapidFlight} proposes an exploration algorithm that is designed to fly at high velocities as much as possible. Instead of planning trajectories, a reactive mode generates instantaneous velocity commands based on currently observed frontiers. The desired velocity at a frontier is proportional to its distance from the MAV so that for a frontier at the detection range the velocity will be maximum and pointing towards the unknown volume. Unfortunately, the method does not guarantee alignment between the field of view of the camera and the direction of motion, which may hinder obstacle avoidance. In case that no frontiers are observed in the field of view their method falls back to classical frontier-based exploration using a regular 3D voxel grid representation of the environment. In contrast, in our framework we use a more efficient environment representation based on a linear octree.

The contribution in~\cite{Papachristos2017Uncertainty-awareRobots} aims to provide accurate mapping simultaneously with the exploration of unknown environments. Their method combines paths sampled from two random trees, where the first tree selects viewpoints to visit next based on the number of unexplored voxels and the second tree is used to find a path to the selected viewpoint that minimizes the robot’s pose and landmarks uncertainty. Paths are only planned inside free space such that they are collision free and can be tracked by the vehicle, given possible motion constraints. However, to guarantee that the paths are collision free they use expensive simulation/optimization-based methods.

In~\cite{Selin2019EfficientEnvironments}, receding horizon Next-Best-View (NBV) planning and frontier exploration are combined for local and global exploration, respectively. The potential information gain for every yaw angle is estimated using sparse raycasting to select the next direction to explore in NBV. Nevertheless, this method does not have a goal oriented objective. The work in~\cite{Oleynikova2018AEnvironments} describes a system for mapping and planning based on incrementally built signed distance fields in dynamically growing maps. An intermediate goal-finding algorithm complements a conservative local planner which treats unknown space as occupied and inaccessible. The method uses an incrementally-built, dynamically-growing Truncated Signed Distance Field (TSDF) map representation to compute collision costs and gradients. The next intermediate goal for navigation is selected from a set of candidate points sampled from the unoccupied space of the TSDF around the robot by solving an optimization problem to maximize a reward combining an exploration gain and the distance to the global goal. Compared to our proposed method, computing signed distance fields and the need to solve optimization problems online makes their approach computationally more expensive.

The papers~\cite{Lin2018AutonomousFusion} and~\cite{Mohta2018FastEnvironments} focused on navigation at high speeds to a goal in fully unknown and cluttered 3D environments. In~\cite{Lin2018AutonomousFusion} the authors used a local planner to find a path to a target on a global guiding path. The local map is built from depth images using GPU-accelerated TSDF fusion. The perception-action loop is closed using a standard multirotor controller to execute the trajectory generated by a nonlinear optimization solver, which is initialized with the path computed by the local planner. The overall method is compute intensive, requiring both a high-end CPU and a GPU. They report flying in a sparsely cluttered environment at up to 2.2 m/s but with an average whole system delay of 230 ms navigation at such high velocities would not be safe in general unknown environments. In~\cite{Mohta2018FastEnvironments} the authors used a local robot-centric map and a local planner to generate the robot trajectory. To escape dead ends and obtain globally consistent local actions an A* planner is provided with a hybrid map formed by combining a local 3D map with a 2D global map. The piecewise linear path from the planner guides the convex decomposition of the local map to find a safe corridor in free space formed as linear equality constraints in a quadratic program (QP) for collision checking. The QP also takes into account dynamic constraints and a modified cost functional such that the generated trajectories are close to the center of the safe corridor. According to the authors, one of the limiting factors for reaching high speeds in their real-world experiments was the size of the map for planning. The other key factor was the sparsity of the map which in their case was determined by the use of a nodding 2D lidar which could reliably run at only around 1.5 Hz. In contrast, our complete system runs at 30 Hz with an average delay of around 3.4 ms.

Among the recent methods focusing on navigation in cluttered environments, not necessarily on exploration, are the works of \cite{Usenko2017Real-TimeBuffer,Gao2018OnlinePolynomial,Florence2018NanoMap:Data}. The paper~\cite{Usenko2017Real-TimeBuffer} proposes a robocentric, fixed-size three-dimensional circular buffer to maintain local information about the environment, trading off the ability to arbitrarily large occupancy maps for faster lookup and measurement insertion operations. Points are inserted into the occupancy buffer by using raycast operations and voxels are updated by using the hit and miss probabilities similar to Octomap. The replanning problem is represented as an optimization of an endpoint cost that penalizes position and velocity deviations at the end of the trajectory, a collision cost function, the cost of the integral over the squared derivatives (acceleration, jerk, snap) and a soft limit on the norm of time derivatives (velocity, acceleration, jerk and snap) over the trajectory. However, this method may get stuck in local minima and the optimization does not take into account the yaw dynamics.
The authors in~\cite{Gao2018OnlinePolynomial} consider the navigation problem in unknown cluttered environments by a MAV equipped with a monocular fish-eye camera. A time-indexed path is generated by fast marching on a velocity field induced from an Euclidead Signed Distance Field (ESDF). The nodes from the path are then used for building a flight corridor in a voxel grid. Trajectory generation is done by posing a constrained quadratic optimization problem and solving it for the coefficients of a Bernstein polynomial which minimize the jerk along the trajectory for each of the three spatial dimensions. The constraints ensure the smoothness, safety as well as the dynamical feasibility of the trajectory. However this method is computationally expensive and does not guarantee that a feasible solution will be found. 
The work in~\cite{Florence2018NanoMap:Data} propose uncertainty-aware proximity queries for planning without any prior discretization of the data in a world frame. The method works by reverse searching over time through sensor measurement views until finding a satisfactory view of a subset of space. The pose uncertainty associated with depth sensor measurements is incorporated into planning by treating each pose with frame-specific uncertainty relative to the current body frame. Nonetheless, this method may get stucked in dead ends.

\section{Problem Statement and Overview of the Solution}

In this work, we address the problem of autonomous navigation of a holonomic robot in a 3D space, from an initial pose to a desired feasible pose in a cluttered unknown environment. It is considered that the robot has an odometry module 
to measure position and orientation with respect to the robot's initial pose. 
Existing visual inertial odometry techniques, such as~\cite{Qin2018VINS-Mono:Estimator}~\cite{Forster2016SVO:Systems}, are adequate for navigation as drift can be expected to be low at least within the local planning area around the robot.
In addition, we assume that the robot is equipped with at least one 3D sensor, for instance RGB-D or stereo cameras with finite resolution and known horizontal and vertical field of view.

\subsection{Notation}
\begin{figure}
    \centering
    \def\svgwidth{8.5cm}
\begingroup%
  \makeatletter%
  \providecommand\color[2][]{%
    \errmessage{(Inkscape) Color is used for the text in Inkscape, but the package 'color.sty' is not loaded}%
    \renewcommand\color[2][]{}%
  }%
  \providecommand\transparent[1]{%
    \errmessage{(Inkscape) Transparency is used (non-zero) for the text in Inkscape, but the package 'transparent.sty' is not loaded}%
    \renewcommand\transparent[1]{}%
  }%
  \providecommand\rotatebox[2]{#2}%
  \ifx\svgwidth\undefined%
    \setlength{\unitlength}{976.65446604bp}%
    \ifx\svgscale\undefined%
      \relax%
    \else%
      \setlength{\unitlength}{\unitlength * \real{\svgscale}}%
    \fi%
  \else%
    \setlength{\unitlength}{\svgwidth}%
  \fi%
  \global\let\svgwidth\undefined%
  \global\let\svgscale\undefined%
  \makeatother%
  \begin{picture}(1,0.71408012)%
      \put(0,0){\includegraphics[width=\unitlength,page=1]{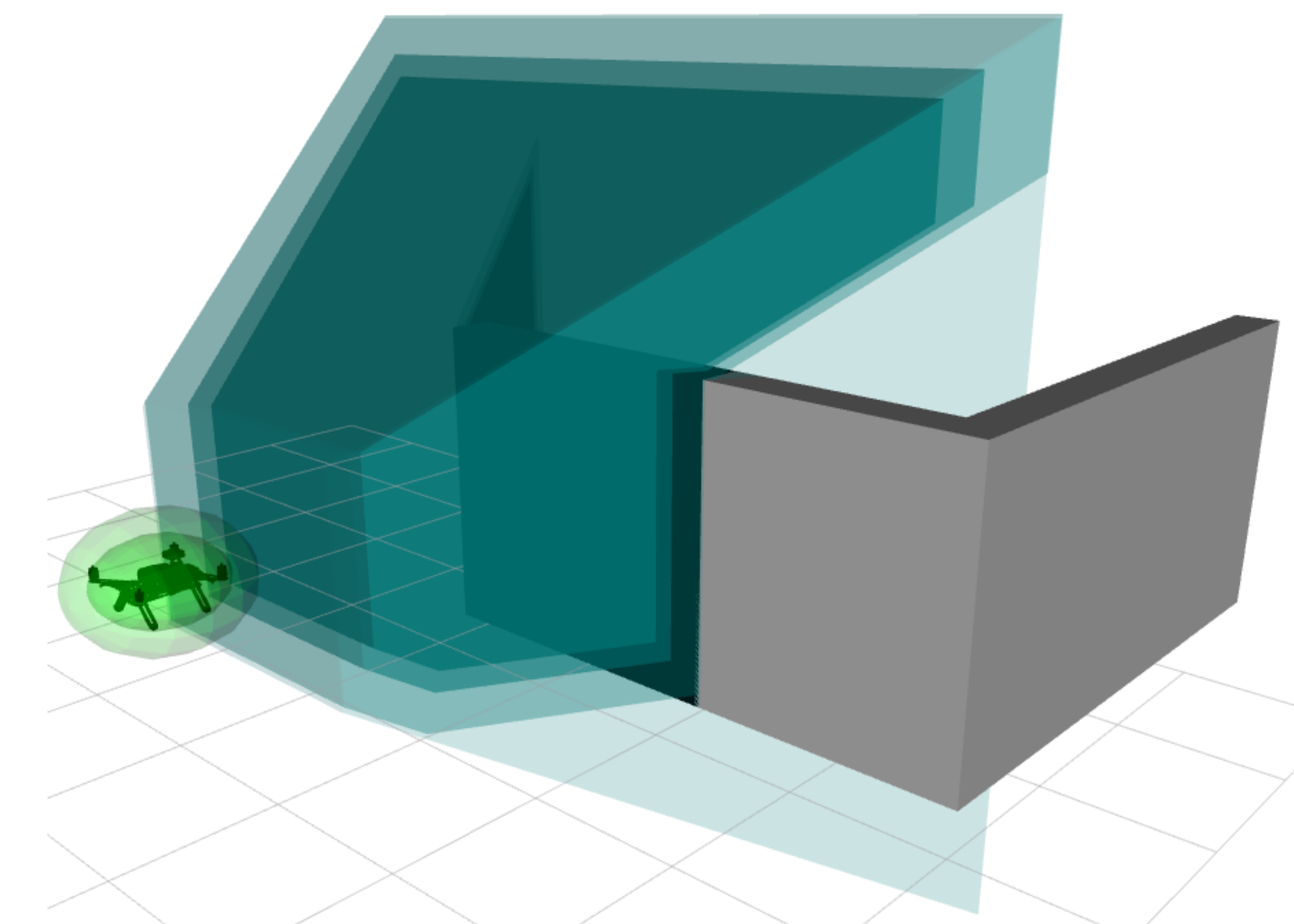}}%
      \put(0,0){\includegraphics[width=\unitlength,page=2]{images/regions.pdf}}%
      \put(0,0){\includegraphics[width=\unitlength,page=3]{images/regions.pdf}}%
      \put(0,0){\includegraphics[width=\unitlength,page=4]{images/regions.pdf}}%
      
      \footnotesize{
                \put(0.88980479,0.53113128){\color[rgb]{0,0,0}\makebox(0,0)[lb]{\smash{$\frustum[\disc]$}}}%
                \put(0.88980479,0.58222676){\color[rgb]{0,0,0}\makebox(0,0)[lb]{\smash{$\freespace[\disc]$}}}%
                \put(0.88980479,0.63332226){\color[rgb]{0,0,0}\makebox(0,0)[lb]{\smash{$\mathcal{V}_{\text{safe}}[\disc]$}}}%
                \put(0.88980479,0.68589955){\color[rgb]{0,0,0}\makebox(0,0)[lb]{\smash{$\castspace[\disc]$}}}%
                \put(0.91732415,0.12383893){\color[rgb]{0,0,0}\makebox(0,0)[lb]{\smash{$\occupiedspace$}}}%
                \put(0.51714966,0.08381387){\color[rgb]{0,0,0}\makebox(0,0)[lb]{\smash{$\occupiedspaceMem[\disc]$}}}%
                \put(-0.0028942,0.34526695){\color[rgb]{0,0,0}\makebox(0,0)[lb]{\smash{$\Ball_{\text{R}_\text{r}}$}}}%
                \put(0.03452444,0.13945469){\color[rgb]{0,0,0}\makebox(0,0)[lb]{\smash{$\Ball_\saferadious$}}}%
                
                \put(0.60163952,0.29631973){\color[rgb]{1,1,1}\rotatebox{-54.82633396}{\makebox(0,0)[lb]{\smash{WALL}}}}%
                \put(0.86170171,0.24572588){\color[rgb]{1,1,1}\rotatebox{53.64193519}{\makebox(0,0)[lb]{\smash{WALL}}}}%
                \put(0.40851564,0.35475138){\color[rgb]{1,1,1}\rotatebox{-53.87731588}{\makebox(0,0)[lb]{\smash{WALL}}}}%
                
                \put(0.07056856,0.33904472){\color[rgb]{0,0,0}\makebox(0,0)[lb]{\smash{$\text{r}$}}}%
                \put(0.17059229,0.13264657){\color[rgb]{0,0,0}\makebox(0,0)[lb]{\smash{$E_p$}}}%
                
                \put(0.67896659,0.6774319){\color[rgb]{0,0,0}\makebox(0,0)[lb]{\smash{$\text{r}$}}}%
                \put(0.6372626,0.64250297){\color[rgb]{0,0,0}\makebox(0,0)[lb]{\smash{$E_p$}}}%
            
    }
    \end{picture}%
\endgroup%
    \caption{Illustration of the different regions used along the paper.}
    \label{fig:regions}
\end{figure}

The following notation is introduced to define the different regions used along the paper. These regions are illustrated in Figure~\ref{fig:regions}.

Let $\Ball_\saferadious(\position)$ be a ball in $\mathbb{R}^3$ of radius $\saferadious=\saferadious_{\text{r}} + E_p
$ centered at a point $p$, where $\saferadious_{\text{r}}$ is the radius of the smallest ball containing the robot and $E_p$ is a safety distance. Continuous-time is represented by $\timme$ while $\disc$ represents discrete-time. The frame rate of the 3D sensor is denoted as $\framerate$, and the field of view at the $\disc$-th frame, which is assumed to be a frustum, is denoted by $\frustum[\disc]$. The configuration space is a bounded 3D cubic volume $\volume\subset\mathbb{R}^3$. The obstacle region is denoted by $\occupiedspace \subset \volume$ which characterizes the environment. Portions of $\occupiedspace$ that are sensed to be occupied are added to the volume $\occupiedspaceMem[\disc]$, when a portion of $\occupiedspaceMem[\disc]$ is sensed to be free it is removed from $\occupiedspaceMem[\disc]$. Thus, $\occupiedspaceMem$ represents the volume that is assumed to be occupied. Let $\castspace[\disc]$ be the volume swept by all the rays originating at the sensor and terminating either at a detected point or at the border of the $\frustum[\disc]$. $\freespace[\disc]=\{\position\in\castspace[\disc]:\Ball_\saferadious(\position)\cap\occupiedspaceMem[\disc] = \emptyset\}$, represents the volume inside the field of view that is free. The volume that is assumed to be traversable is denoted by $\navigationspace[\disc]=\{\position\in\volume:\Ball_\saferadious(\position)\cap
\occupiedspaceMem[\disc] = \emptyset\}$. Moreover, let $\mathcal{V}_{\text{safe}}[\disc]=\{\position\in\castspace[\disc]:\Ball_{R_\text{r}}(\position)\cap\occupiedspaceMem = \emptyset\}$ be a volume in $\castspace[\disc]$ containing all points with a distance greater than $R_\text{r}$ to any obstacle. Note that as long as the robot remains inside $\mathcal{V}_{\text{safe}}[\disc]$ it is collision free. Our path planning is constrained to sampling points inside $\navigationspace[\disc]$, since the trajectory generation considers a maximum deviation of $E_p$ as explained in Section \ref{Sec:Planning}.

A voxel $\voxel$ represents the space contained in a cubic volume, obtained by recursively dividing $\volume$ into eight equally sized volumes. The smallest voxel edge length is $\voxelrad$.

The position, linear velocity, linear acceleration, linear jerk and linear snap are denoted by $p,v,a,j,\snap\in\mathbb{R}^3$, respectively. The yaw orientation and its derivatives, angular velocity, angular acceleration, are denoted by $\yaw,\omega,\alpha\in \mathbb{R}$, respectively. 

A trajectory is defined as the vector $[p(t)^T,\ \yaw(t)]^T$ containing a time-dependant sequence of positions and yaw orientations, where $[\cdot]^T$ is the transpose operator.  The augmented state for the trajectory is defined as $z(t) = [p(t)^T,\yaw(t),v(t)^T,\omega(t),a(t)^T,j(t)^T]^T$. The set of feasible states is defined as 
\begin{equation}
\sallowed[\disc]=\left\{\state\ :
\begin{array}{l}
\position \in\mathcal{V}_{\text{safe}}[\disc],\\
\|v\|_\infty\leq\vmax,\  
\|a\|_\infty\leq\amax,\\ 
\|j\|_\infty\leq\jmax,\
|\omega|\leq\omegamax,\ 
\end{array}
\right\}.
\label{Eq:Zsafe}
\end{equation}

\subsection{Problem Definition}

\begin{problem}
Consider an holonomic robot modeled as a ball $\Ball_\saferadious(\position(\timme))$ with dynamics described by $\dot{\position}(\timme)=\velocity(\timme)$, $\dot{\velocity}(\timme)=\acceleration(\timme)$, $\dot{\acceleration}(\timme) = \jerk(\timme)$, $\dot{j}(\timme) = \snap(\timme)$, $\dot{\yaw}(\timme)=\angvelocity(\timme)$ and $\dot{\angvelocity}(\timme) = \angacceleration(\timme)$. 
Given an initial and final positions $\position_\start, \position_\final \notin \{\position\in\volume:\Ball_\saferadious(\position)\cap \occupiedspace = \emptyset \}$, and initial and final yaws $\yaw_\start, \yaw_\final$, respectively, navigate from $\state(\timme_\start)=[\position_\start^T, \yaw_\start, 0, \cdots, 0]^T$ to $\state(\timme_\final)=[\position_\final^T, \yaw_\final, 0, \cdots, 0]^T$ such that $\state(\timme)\in\sallowed[k]$,  $\forall t\in[t_i,t_f]$ and $k = \lfloor (t~-~t_i) \framerate \rfloor$, with some finite-time $t_f$.
\label{ProbPlanning}
\end{problem}

\begin{figure}
    \centering
    \def\svgwidth{10cm}
\begingroup%
  \makeatletter%
  \providecommand\color[2][]{%
    \errmessage{(Inkscape) Color is used for the text in Inkscape, but the package 'color.sty' is not loaded}%
    \renewcommand\color[2][]{}%
  }%
  \providecommand\transparent[1]{%
    \errmessage{(Inkscape) Transparency is used (non-zero) for the text in Inkscape, but the package 'transparent.sty' is not loaded}%
    \renewcommand\transparent[1]{}%
  }%
  \providecommand\rotatebox[2]{#2}%
  \ifx\svgwidth\undefined%
    \setlength{\unitlength}{441.87478638bp}%
    \ifx\svgscale\undefined%
      \relax%
    \else%
      \setlength{\unitlength}{\unitlength * \real{\svgscale}}%
    \fi%
  \else%
    \setlength{\unitlength}{\svgwidth}%
  \fi%
  \global\let\svgwidth\undefined%
  \global\let\svgscale\undefined%
  \makeatother%
  \begin{picture}(1,0.4280068)%
      \put(0,0){\includegraphics[width=\unitlength,page=1]{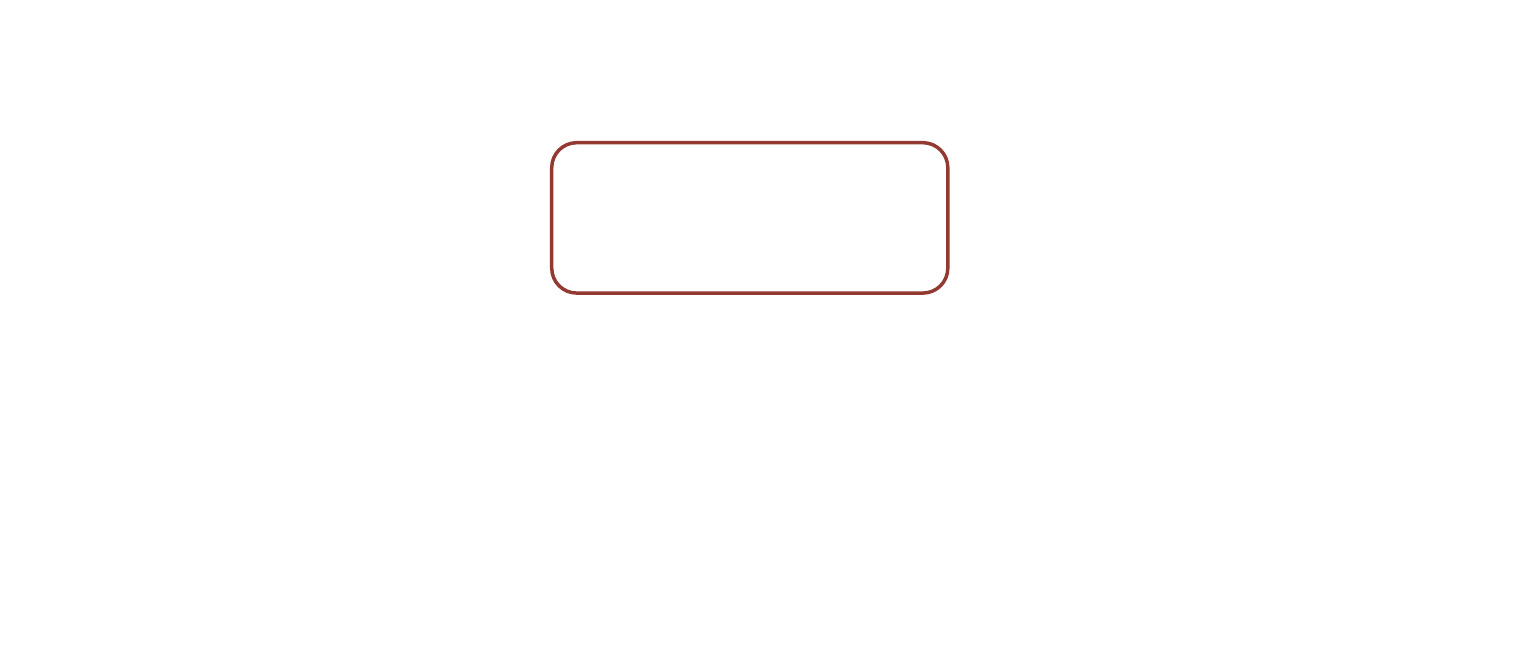}}%
      \put(0,0){\includegraphics[width=\unitlength,page=2]{images/main_diagram.pdf}}%
      \put(0,0){\includegraphics[width=\unitlength,page=3]{images/main_diagram.pdf}}%
      \put(0,0){\includegraphics[width=\unitlength,page=4]{images/main_diagram.pdf}}%
      \put(0,0){\includegraphics[width=\unitlength,page=5]{images/main_diagram.pdf}}%
      \put(0,0){\includegraphics[width=\unitlength,page=6]{images/main_diagram.pdf}}%
      \put(0,0){\includegraphics[width=\unitlength,page=7]{images/main_diagram.pdf}}%
      \put(0,0){\includegraphics[width=\unitlength,page=8]{images/main_diagram.pdf}}%
      \put(0,0){\includegraphics[width=\unitlength,page=9]{images/main_diagram.pdf}}%
      \put(0,0){\includegraphics[width=\unitlength,page=10]{images/main_diagram.pdf}}%
      \put(0,0){\includegraphics[width=\unitlength,page=11]{images/main_diagram.pdf}}%
      \put(0,0){\includegraphics[width=\unitlength,page=12]{images/main_diagram.pdf}}%
      \put(0,0){\includegraphics[width=\unitlength,page=13]{images/main_diagram.pdf}}%
      \put(0,0){\includegraphics[width=\unitlength,page=14]{images/main_diagram.pdf}}%
      \put(0,0){\includegraphics[width=\unitlength,page=15]{images/main_diagram.pdf}}%
      \scriptsize{
                  
                    \put(0.381,0.29210636){\color[rgb]{0,0,0}\makebox(0,0)[lb]{\smash{Map representation}}}%
                    \put(0.437,0.26210636){\color[rgb]{0,0,0}\makebox(0,0)[lb]{\smash{@33Hz}}}%
                    \put(0.408963110265,0.06806056){\color[rgb]{0,0,0}\makebox(0,0)[lb]{\smash{Path Planning}}}%
                    \put(0.435957110265,0.03806056){\color[rgb]{0,0,0}\makebox(0,0)[lb]{\smash{@33Hz}}}%
                    \put(0.726258033811,0.06806056){\color[rgb]{0,0,0}\makebox(0,0)[lb]{\smash{Trajectory Generation}}}%
                    \put(0.785033811,0.03806056){\color[rgb]{0,0,0}\makebox(0,0)[lb]{\smash{@240Hz}}}%
                    \put(0.7425210071,0.400497992){\color[rgb]{0,0,0}\makebox(0,0)[lb]{\smash{Control Algorithm}}}%
                    \put(0.78210071,0.37497992){\color[rgb]{0,0,0}\makebox(0,0)[lb]{\smash{@240Hz}}}%
                    \put(0.02,0.390563331){\color[rgb]{0,0,0}\makebox(0,0)[lb]{\smash{Odometry Sensor}}}%
                    \put(0.06598161,0.36363331){\color[rgb]{0,0,0}\makebox(0,0)[lb]{\smash{@240Hz}}}%
                    \put(0.0435,0.218149977){\color[rgb]{0,0,0}\makebox(0,0)[lb]{\smash{Depth Sensor}}}%
                    \put(0.075598161,0.1880149977){\color[rgb]{0,0,0}\makebox(0,0)[lb]{\smash{@33Hz}}}%
                    \put(0.0640512838,0.052515383){\color[rgb]{0,0,0}\makebox(0,0)[lb]{\smash{Global Goal}}}%
                    \put(0.26540877,0.199054077){\color[rgb]{0,0,0}\makebox(0,0)[lb]{\smash{disparity}}}%
                    \put(0.27940877,0.172054077){\color[rgb]{0,0,0}\makebox(0,0)[lb]{\smash{image}}}%
                    \put(0.2898469,0.07037314){\color[rgb]{0,0,0}\makebox(0,0)[lb]{\smash{$\position_\final$}}}%
                    \put(0.2798469,0.04037314){\color[rgb]{0,0,0}\makebox(0,0)[lb]{\smash{ $\yaw_\final$}}}%
                    \put(0.43936464,0.17178631){\color[rgb]{0,0,0}\makebox(0,0)[lb]{\smash{ $\occupiedspaceMem$}}}%
                    \put(0.61230269,0.0648314){\color[rgb]{0,0,0}\makebox(0,0)[lb]{\smash{ next }}}%
                    \put(0.60330269,0.0398314){\color[rgb]{0,0,0}\makebox(0,0)[lb]{\smash{ action }}}%
                    \put(0.81517822,0.19339566){\color[rgb]{0,0,0}\makebox(0,0)[lb]{\smash{ $\state(\timme)$ }}}%
                    \put(0.58183876,0.12272815){\color[rgb]{0,0,0}\makebox(0,0)[lb]{\smash{ motion planning}}}%
                    
                    \put(0.26782178,0.3554764){\color[rgb]{0,0,0}\makebox(0,0)[lb]{\smash{$\position$, $\yaw$}}}%

    }
    \tiny{
             
             
    }
    \end{picture}%
\endgroup%

    \caption{Illustration of the framework proposed to solve Problem~\ref{ProbPlanning}.}
    \label{fig:outline_solution}
\end{figure}

\section{Reaching a goal in an unknown environment}

The outline of the solution proposed to solve Problem~\ref{ProbPlanning} is shown in Figure \ref{fig:outline_solution}. Since the robot is navigating in an unknown environment, we propose a framework divided in three main elements: Map representation, Path planning and Trajectory generation. 

First, at the map representation stage, a point cloud is computed taking as an input a disparity image and the odometry of the robot and added to a map representation of the occupied space $\occupiedspaceMem[\disc]$. Note that at any time $k$ the contents of the map inside $\frustum[\disc]$ may change. This may be due to measurement errors at the current frame or sensor drift errors accumulated over time. When a voxel previously marked as occupied is observed to be free, the $\occupiedspaceMem[\disc]$ is updated. Second, at the path planning stage, the next exploration action is generated by creating a path inside $\navigationspace[\disc]$, using a variation of the RRT connect, where one tree is expanded using the current position as root and sampling only inside  $\freespace[\disc]$, and it is connected with the other tree expanded from the goal used as root, which samples in $\navigationspace[\disc]$. In this way, the next position is always contained in $\freespace[\disc]$. Finally with the next planned action obtained, a trajectory is generated in such a way that drives the robot from its current state to the next planned action taking into account the robot's dynamical constraints. The yaw is controlled in such a way that the robot always moves inside $\frustum[\disc]$.



\subsection{Map Representation} \label{SecMapping}

\begin{figure}
    \centering
    \vspace{0.5in}
    \def\svgwidth{10cm}
\begingroup%
  \makeatletter%
  \providecommand\color[2][]{%
    \errmessage{(Inkscape) Color is used for the text in Inkscape, but the package 'color.sty' is not loaded}%
    \renewcommand\color[2][]{}%
  }%
  \providecommand\transparent[1]{%
    \errmessage{(Inkscape) Transparency is used (non-zero) for the text in Inkscape, but the package 'transparent.sty' is not loaded}%
    \renewcommand\transparent[1]{}%
  }%
  \providecommand\rotatebox[2]{#2}%
  \ifx\svgwidth\undefined%
    \setlength{\unitlength}{976.65446604382.37031335bp}%
    \ifx\svgscale\undefined%
      \relax%
    \else%
      \setlength{\unitlength}{\unitlength * \real{\svgscale}}%
    \fi%
  \else%
    \setlength{\unitlength}{\svgwidth}%
  \fi%
  \global\let\svgwidth\undefined%
  \global\let\svgscale\undefined%
  \makeatother%
  \begin{picture}(1,0.7425855389698368)%
      \put(0,0){\includegraphics[width=\unitlength,page=1]{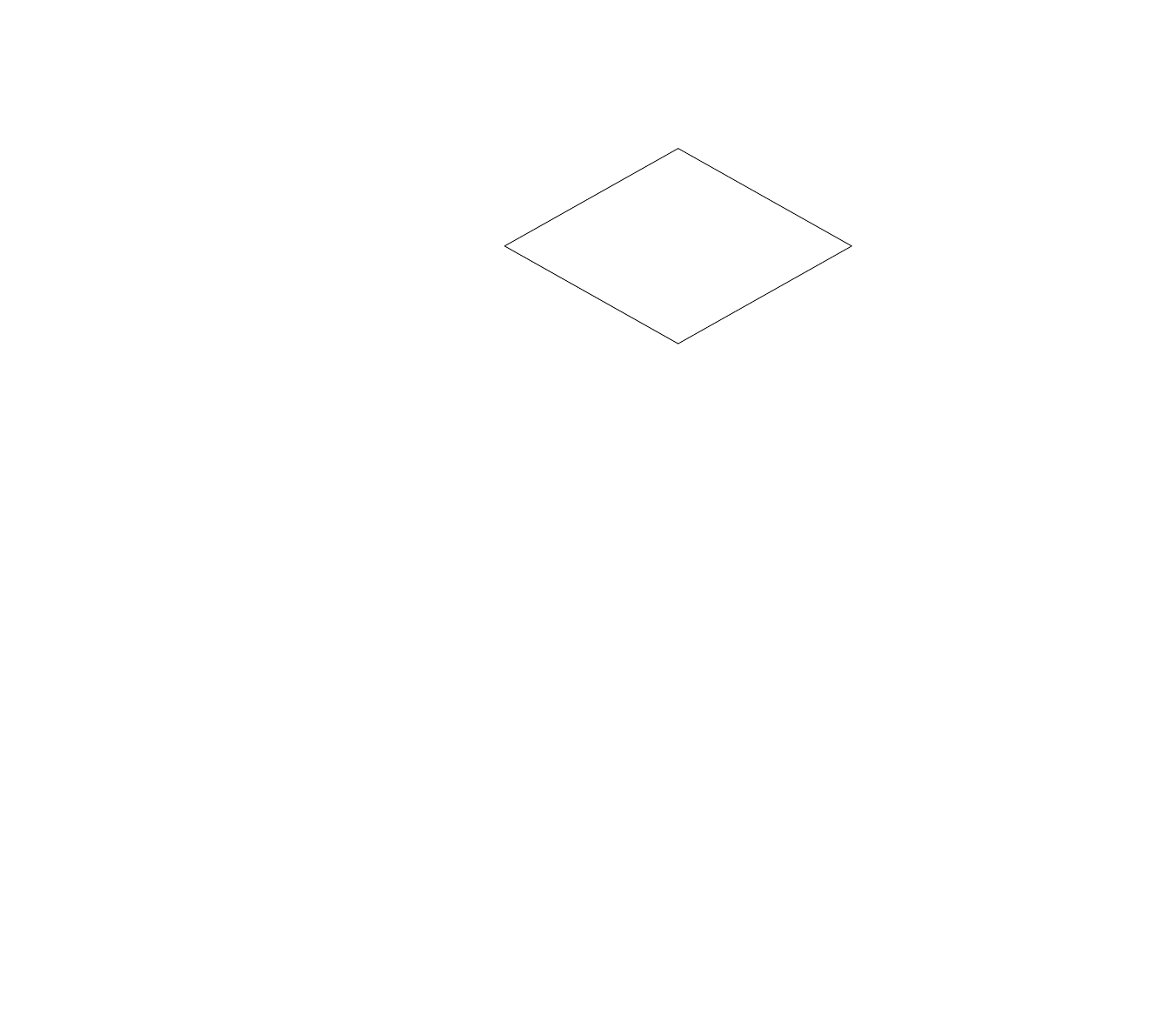}}%
      \put(0,0){\includegraphics[width=\unitlength,page=2]{images/maqpping_diagram.pdf}}%
      \put(0,0){\includegraphics[width=\unitlength,page=3]{images/maqpping_diagram.pdf}}%
      \put(0,0){\includegraphics[width=\unitlength,page=4]{images/maqpping_diagram.pdf}}%
      \put(0,0){\includegraphics[width=\unitlength,page=5]{images/maqpping_diagram.pdf}}%
      \put(0,0){\includegraphics[width=\unitlength,page=6]{images/maqpping_diagram.pdf}}%
      \put(0,0){\includegraphics[width=\unitlength,page=7]{images/maqpping_diagram.pdf}}%
      \put(0,0){\includegraphics[width=\unitlength,page=8]{images/maqpping_diagram.pdf}}%
      \put(0,0){\includegraphics[width=\unitlength,page=9]{images/maqpping_diagram.pdf}}%
      \put(0,0){\includegraphics[width=\unitlength,page=10]{images/maqpping_diagram.pdf}}%
      \put(0,0){\includegraphics[width=\unitlength,page=11]{images/maqpping_diagram.pdf}}%
      \put(0,0){\includegraphics[width=\unitlength,page=12]{images/maqpping_diagram.pdf}}%
      \put(0,0){\includegraphics[width=\unitlength,page=13]{images/maqpping_diagram.pdf}}%
      \put(0,0){\includegraphics[width=\unitlength,page=14]{images/maqpping_diagram.pdf}}%
      \put(0,0){\includegraphics[width=\unitlength,page=15]{images/maqpping_diagram.pdf}}%
      \put(0,0){\includegraphics[width=\unitlength,page=16]{images/maqpping_diagram.pdf}}%
      \put(0,0){\includegraphics[width=\unitlength,page=17]{images/maqpping_diagram.pdf}}%
      
      \footnotesize{
                \put(0.48998456,0.69178211){\color[rgb]{0,0,0}\makebox(0,0)[lb]{\smash{Points in \textbf{\textit{Point}}}}}%
                \put(0.5008456,0.66178211){\color[rgb]{0,0,0}\makebox(0,0)[lb]{\smash{\textbf{\textit{Cloud List}} ?}}}%
                \put(0.48865784,0.2206831){\color[rgb]{0,0,0}\makebox(0,0)[lb]{\smash{\textit{key.level} == $\maxlevel$ ?}}}%
                \put(0.86344287,0.51278553){\color[rgb]{0,0,0}\makebox(0,0)[lb]{\smash{Update}}}%
                \put(0.835344287,0.48278553){\color[rgb]{0,0,0}\makebox(0,0)[lb]{\smash{probability}}}%
                \put(0.06267507,0.2370701){\color[rgb]{0,0,0}\makebox(0,0)[lb]{\smash{Remove \textit{key} from}}}%
                \put(0.14267507,0.2070701){\color[rgb]{0,0,0}\makebox(0,0)[lb]{\smash{\textbf{\textit{Map}}}}}%
                \put(0.43550374,0.04020973){\color[rgb]{0,0,0}\makebox(0,0)[lb]{\smash{\textit{key.level} $\leftarrow$ \textit{key.level}$+1$}}}%
                \put(0.44928134,0.49708604){\color[rgb]{0,0,0}\makebox(0,0)[lb]{\smash{Pop a point from \textbf{\textit{list}}}}}%
                \put(0.42242528,0.39327047){\color[rgb]{0,0,0}\makebox(0,0)[lb]{\smash{\textit{key} $\leftarrow$ find point in \textbf{\textit{Map}}}}}%
                \put(0.03613386,0.0402097){\color[rgb]{0,0,0}\makebox(0,0)[lb]{\smash{Add \textit{children} to \textbf{\textit{Map}}}}}%
                \put(0.05597345,0.65484935){\color[rgb]{0,0,0}\makebox(0,0)[lb]{\smash{Wait new disparity}}}%
                \put(0.13597345,0.62484935){\color[rgb]{0,0,0}\makebox(0,0)[lb]{\smash{image}}}%
                \put(0.07365264,0.49708604){\color[rgb]{0,0,0}\makebox(0,0)[lb]{\smash{Get point cloud}}}%
                 \put(0.46046577,0.86527876){\color[rgb]{0,0,0}\makebox(0,0)[lb]{\smash{Grouped raycasting}}}%
                 \put(0.02047617,0.31232052){\color[rgb]{0,0,0}\makebox(0,0)[lb]{\smash{Update tree}}}%
                 \put(0.14818194,0.73911319){\color[rgb]{0,0,0}\makebox(0,0)[lb]{\smash{start}}}%

    }
    \scriptsize{
            \put(0.33293488,0.22115441){\color[rgb]{0,0,0}\makebox(0,0)[lb]{\smash{false}}}%
            \put(0.56954383,0.5565206){\color[rgb]{0,0,0}\makebox(0,0)[lb]{\smash{true}}}%
            \put(0.88433807,0.33418618){\color[rgb]{0,0,0}\makebox(0,0)[lb]{\smash{true}}}%
            \put(0.56032705,0.80168078){\color[rgb]{0,0,0}\makebox(0,0)[lb]{\smash{false}}}%
            
    }
    \end{picture}%
\endgroup%

    \caption{Flow diagram of the mapping algorithm.}
    \label{fig:diag_map}
\end{figure}

Recent work has advocated the use of spatial hash tables to represent the environment. Hash tables provide a sparse representation that can be dynamically expanded with constant time insertion and look up. Following a spatial hashing approach similar to \cite{Niener2013Real-timeHashing}, \cite{Oleynikova2018SafeVehicles} showed that voxel hashing can be used advantageously for robotic exploration and mapping. As an alternative to voxel hashing, hierarchical data structures, such as octrees, sacrifice raw speed for single queries but their additional structure can be advantageous for planning, as shown for example in~\cite{Vespa2018EfficientMapping}.

In this work, a linear octree is used to represent the map of the environment. Linear octrees trade-off reduced memory usage for speed by storing only leaf nodes instead of all tree nodes as is the case for regular octrees, while preserving the hierarchical structure~\cite{Gargantini1982LinearObjects}. 

\begin{figure*}
    \centering
    \subfloat[]{\label{subfig:scene}

\def\svgwidth{4.4cm}
\begingroup%
  \makeatletter%
  \providecommand\color[2][]{%
    \errmessage{(Inkscape) Color is used for the text in Inkscape, but the package 'color.sty' is not loaded}%
    \renewcommand\color[2][]{}%
  }%
  \providecommand\transparent[1]{%
    \errmessage{(Inkscape) Transparency is used (non-zero) for the text in Inkscape, but the package 'transparent.sty' is not loaded}%
    \renewcommand\transparent[1]{}%
  }%
  \providecommand\rotatebox[2]{#2}%
  \ifx\svgwidth\undefined%
    \setlength{\unitlength}{350.64566926bp}%
    \ifx\svgscale\undefined%
      \relax%
    \else%
      \setlength{\unitlength}{\unitlength * \real{\svgscale}}%
    \fi%
  \else%
    \setlength{\unitlength}{\svgwidth}%
  \fi%
  \global\let\svgwidth\undefined%
  \global\let\svgscale\undefined%
  \makeatother%
  \begin{picture}(1,1)%
      \put(0,0){\includegraphics[width=\unitlength]{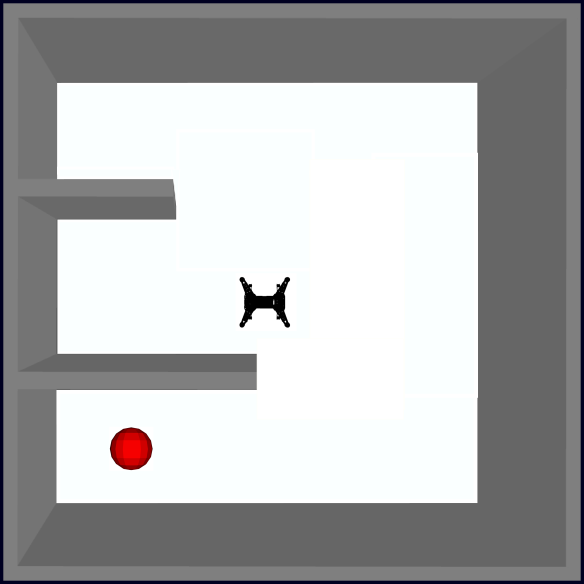}}%
  \footnotesize{
    \put(0.40949223,0.55843928){\color[rgb]{0,0,0}\makebox(0,0)[lb]{\smash{$\position_\start$}}}%
    \put(0.26645945,0.21199113){\color[rgb]{0,0,0}\makebox(0,0)[lb]{\smash{$\position_\final$}}}%
    \put(0.665,0.78){\color[rgb]{0,0,0}\makebox(0,0)[lb]{\smash{$\occupiedspace$}}}%
    }  \end{picture}%
\endgroup%
    }
    \hspace{-0.38cm}
    \subfloat[]{\label{subfig:first_step}
    \def\svgwidth{4.4cm}
\begingroup%
  \makeatletter%
  \providecommand\color[2][]{%
    \errmessage{(Inkscape) Color is used for the text in Inkscape, but the package 'color.sty' is not loaded}%
    \renewcommand\color[2][]{}%
  }%
  \providecommand\transparent[1]{%
    \errmessage{(Inkscape) Transparency is used (non-zero) for the text in Inkscape, but the package 'transparent.sty' is not loaded}%
    \renewcommand\transparent[1]{}%
  }%
  \providecommand\rotatebox[2]{#2}%
  \ifx\svgwidth\undefined%
    \setlength{\unitlength}{350.64566926bp}%
    \ifx\svgscale\undefined%
      \relax%
    \else%
      \setlength{\unitlength}{\unitlength * \real{\svgscale}}%
    \fi%
  \else%
    \setlength{\unitlength}{\svgwidth}%
  \fi%
  \global\let\svgwidth\undefined%
  \global\let\svgscale\undefined%
  \makeatother%
  \begin{picture}(1,1)%
      \put(0,0){\includegraphics[width=\unitlength,page=1]{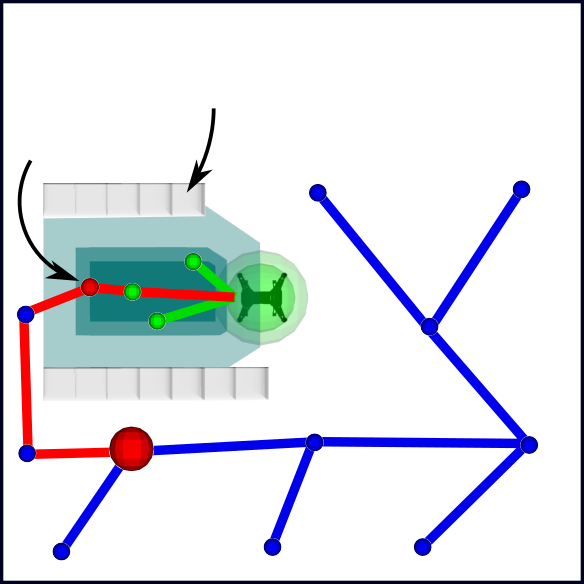}}%
    \footnotesize{
    \put(0.53861084,0.46227653){\color[rgb]{0,0,0}\makebox(0,0)[lb]{\smash{$\position_\current$}}}%
    \put(0.02278241,0.75620389){\color[rgb]{0,0,0}\makebox(0,0)[lb]{\smash{$\position_b$}}}%
    \put(0.23319483,0.16237841){\color[rgb]{0,0,0}\makebox(0,0)[lb]{\smash{$\position_\final$}}}%
    \put(0.31154674,0.84354072){\color[rgb]{0,0,0}\makebox(0,0)[lb]{\smash{$\occupiedspaceMem$}}}%
    
    }
  \end{picture}%
\endgroup%
    }\hspace{-0.38cm}
    \subfloat[]{\label{subfig:second_step}
    \def\svgwidth{4.4cm}
\begingroup%
  \makeatletter%
  \providecommand\color[2][]{%
    \errmessage{(Inkscape) Color is used for the text in Inkscape, but the package 'color.sty' is not loaded}%
    \renewcommand\color[2][]{}%
  }%
  \providecommand\transparent[1]{%
    \errmessage{(Inkscape) Transparency is used (non-zero) for the text in Inkscape, but the package 'transparent.sty' is not loaded}%
    \renewcommand\transparent[1]{}%
  }%
  \providecommand\rotatebox[2]{#2}%
  \ifx\svgwidth\undefined%
    \setlength{\unitlength}{350.64566926bp}%
    \ifx\svgscale\undefined%
      \relax%
    \else%
      \setlength{\unitlength}{\unitlength * \real{\svgscale}}%
    \fi%
  \else%
    \setlength{\unitlength}{\svgwidth}%
  \fi%
  \global\let\svgwidth\undefined%
  \global\let\svgscale\undefined%
  \makeatother%
  \begin{picture}(1,1)%
      \put(0,0){\includegraphics[width=\unitlength,page=1]{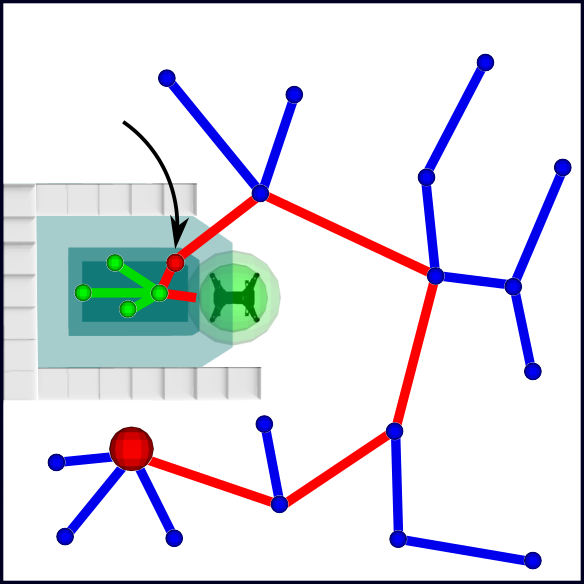}}
    \footnotesize{
    \put(0.48999677,0.46551742){\color[rgb]{0,0,0}\makebox(0,0)[lb]{\smash{$\position_c$}}}%
    \put(0.12620498,0.80826862){\color[rgb]{0,0,0}\makebox(0,0)[lb]{\smash{$\position_b$}}}%
     \put(0.27848115,0.2402032){\color[rgb]{0,0,0}\makebox(0,0)[lb]{\smash{$\position_\final$}}}%
    }
  \end{picture}%
\endgroup%
    }\hspace{-0.38cm}
    \subfloat[]{\label{subfig:third_step}
    \def\svgwidth{4.4cm}
\begingroup%
  \makeatletter%
  \providecommand\color[2][]{%
    \errmessage{(Inkscape) Color is used for the text in Inkscape, but the package 'color.sty' is not loaded}%
    \renewcommand\color[2][]{}%
  }%
  \providecommand\transparent[1]{%
    \errmessage{(Inkscape) Transparency is used (non-zero) for the text in Inkscape, but the package 'transparent.sty' is not loaded}%
    \renewcommand\transparent[1]{}%
  }%
  \providecommand\rotatebox[2]{#2}%
  \ifx\svgwidth\undefined%
    \setlength{\unitlength}{350.64566926bp}%
    \ifx\svgscale\undefined%
      \relax%
    \else%
      \setlength{\unitlength}{\unitlength * \real{\svgscale}}%
    \fi%
  \else%
    \setlength{\unitlength}{\svgwidth}%
  \fi%
  \global\let\svgwidth\undefined%
  \global\let\svgscale\undefined%
  \makeatother%
  \begin{picture}(1,1)%
      \put(0,0){\includegraphics[width=\unitlength,page=1]{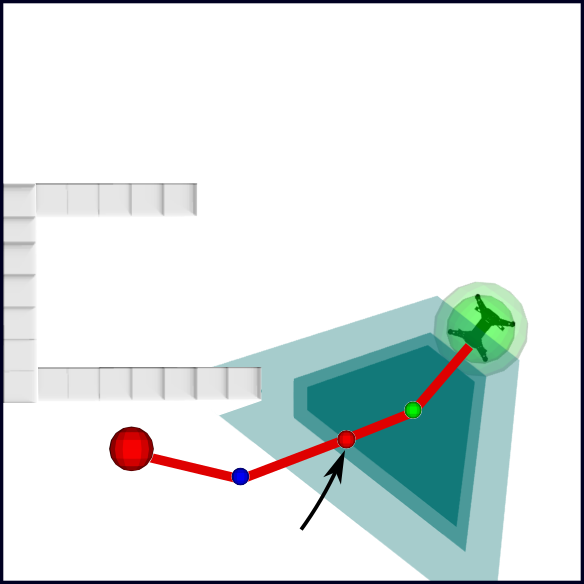}}%
    \footnotesize{
        \put(0.83637728,0.55221232){\color[rgb]{0,0,0}\makebox(0,0)[lb]{\smash{$\position_\current$}}}%
        \put(0.44943857,0.06440364){\color[rgb]{0,0,0}\makebox(0,0)[lb]{\smash{$\position_b$}}}%
        \put(0.23319483,0.16237841){\color[rgb]{0,0,0}\makebox(0,0)[lb]{\smash{$\position_\final$}}}%
    }
  \end{picture}%
\endgroup%
    
    }
    \caption{Illustration of the motion planning method for reaching a goal in an unknown environment: (a) Unknown environment and goal. (b) Initial proposed path using the current map. (c) Replanned path after updating the map. (d) Final path towards the goal.}
    \label{fig:method_example}
\end{figure*}

The map creation is illustrated in Fig.~\ref{fig:diag_map}. The \console{Map} is implemented as a linear octree with $\maxlevel$ levels and with axes-aligned bounding boxes. When a new disparity image is received, a point cloud is created by reprojecting occupied pixels to 3D-space in the world coordinate system (block \console{Get point cloud} in Figure~\ref{fig:diag_map}). For each point, a key is generated formed by a level code indicating the depth $\maplevel$ in the octree (i.e. $\console{key.level}=\maplevel$) and a spatial code computed by
\begin{equation}\label{eqn:key}
\console{key.spatial}(x,y,z,l) = \console{cat}\left\{\bar{x},\bar{y},\bar{z}\right\}
\end{equation}
where
\begin{equation}
\begin{array}{lll}
\bar{x} = \left\lfloor\frac{x}{2^{\maxlevel - \maplevel} \voxelrad}\right\rfloor, & \bar{y} = \left\lfloor\frac{y}{2^{\maxlevel - \maplevel} \voxelrad}\right\rfloor, & \bar{z} = \left\lfloor\frac{z}{2^{\maxlevel - \maplevel} \voxelrad}\right\rfloor,
\end{array}
\end{equation}
$\console{cat}\{\cdot,\cdot,\cdot\}$ concatenates its arguments represented in $m$ digits and $\lfloor\cdot\rfloor$ represents the floor operation. To find the key corresponding to the current location of the point in the map, equation~\eqref{eqn:key} is used to generate different keys from the root level $\maplevel=0$ up to $\maplevel=\maxlevel$ until the key is found in \console{Map} (block \console{find point in Map} in Figure~\ref{fig:diag_map}). As is common in octrees, \console{Map} insertion is logarithmic in the size of the map, which can be dynamically expanded as needed at practically no cost. For each level $\maplevel$, if the obtained key does not correspond to one in $\maplevel$, the key is removed from \console{Map} and the space represented by this voxel is subdivided into eight equal volume subspaces and their centers are used to generate new keys that are added to the \console{Map} (block \console{Add children in Map} in Figure~\ref{fig:diag_map}). Then, \console{key.level} is incremented. This process (block \console{Update tree} in Figure~\ref{fig:diag_map}) is repeated until \console{key.level} is equal to $\maxlevel-1$. Finally, when \console{key.level} is equal to $\maxlevel$ the occupancy probability of the corresponding voxel is increased using the stereo error model  from~\cite{Schauwecker2014RobustVision} (block \console{Update probability} in Figure~\ref{fig:diag_map}). This model has been found to provide an effective method to account for measurement errors, in particular temporally and spatially correlated errors which occur in stereo vision systems. When the voxel probability is larger than a certain threshold $\pocc$, it is considered as part of $\occupiedspaceMem[\disc]$. 

After processing all points in the point cloud, the grouped raycasting method, suggested by \cite{Oleynikova2017Voxblox:Planning}, is applied to all voxels in  $\occupiedspaceMem[\disc]\cap\frustum[\disc]$ with occupancy probability larger than a threshold $\pray$, with $\pray>\pocc$ (block \console{Grouped raycasting} in Figure~\ref{fig:diag_map}). This step consists on updating the occupancy probability of the voxels in $\castspace[k]$ 
by decreasing their occupancy probability according to the stereo error model~\cite{Schauwecker2014RobustVision}. 

Even though a flat grid representation of the map, i.e. only saving observed voxels at maximum discretization, would be faster for map creation, our linear octree implementation presents a significant advantage for collision checking between points during path planning. Notice that the linear octree is similar to a sparse grid representation with an additional block \console{Update tree} (see Figure~\ref{fig:diag_map}).
Also notice that all points inserted into the map must be inside $\frustum[\disc]$ and that typically there will be some overlap between $\frustum[\disc]$ and $\frustum[\disc+1]$ as the robot needs to update it's map frequently during navigation. Therefore, since the method \console{Update tree}    is only applied to points in $\frustum[\disc+1]\setminus\frustum[\disc]$ the worst case in terms of computation cost would be expected to happen only at the start when the first frame is processed. Nonetheless, in the worst case, that is, when $100\%$ of the points in $\frustum[\disc]$ are added to the map, the penalty in computation time to implement our linear octree has been found experimentally to be only a small fraction of complete perception pipeline. 

\subsection{Motion Planning}
\label{Sec:Planning}

In our approach, the robot always has to move in its field of view, i.e., inside $\freespace[\disc]$, to ensure collision-free navigation. 
Te proposed motion planning algorithm is described in the flow diagram of Figure~\ref{fig:path_planning_diagram} and illustrated in Figure~\ref{fig:method_example}.

At the beginning, the robot has a starting position $\position_\start$, a goal position $\position_\final$, a current position $\position_\current = \position_\start$  and has no information about the map as illustrated in Figure~\ref{subfig:scene}. When a new disparity image from the depth sensor arrives, it is converted to a point cloud and serves as an input to the map representation described in Section~\ref{SecMapping}. The map representation, the current position $\position_\current$ and the goal position $\position_\final$ serve as the input to the path planning algorithm which is solved using a variation of the RRT-Connect \cite{Kuffner2000RRT-connect:Planning} algorithm (block \console{Solve~RRT-Connect} in Figure~\ref{fig:path_planning_diagram}). The solution of this problem generates a sequence of positions that connects through $\navigationspace[\disc]$ the current position $\position_\current$ and the goal $\position_\final$, as illustrated in Figure~\ref{subfig:first_step}.

The output of the RRT-Connect consists in a sequence of waypoints $\{p_c,p_2,p_3,\dots,p_f\}$ in $\navigationspace[\disc]$. 
Let 
$$
\waypoints[\disc] = \{[\position_\current^T,\psi_c], [\position_2^T,\psi_2], [\position_3^T,\psi_3], \dots, [\position_\final^T,\psi_f]\}.
$$ 
be a path in $\mathbb{R}^4$ from the robot's current position and orientation $[\position_\current^T,\psi_c]$ to the given goal configuration $[\position_\final^T,\psi_f]$, in such a way that all positions are elements of the RRT-Connect solution and each $\psi_i$ corresponds to the angle from the $x$-axis to the vector $p_{i+1}-p_i$. Choosing $\psi_i$ in this way ensures that at each waypoint the field of view of the robot is aligned with the direction of the next waypoint. 
Hence, the robot always navigates in $\freespace[\disc]$. 

To execute the path, the subset $\tilde{\waypoints}[\disc]$ of $m$ waypoints in $\waypoints[\disc]$ together with the point $[p_b,\psi_m]$ where $p_b$ is the intersection between the boundary of $\freespace[\disc]$ and line segment connecting $p_m$ and $p_{m+1}$ (block \console{Get sub-path in $\freespace[\disc]$} in Figure~\ref{fig:path_planning_diagram}) is pushed to the trajectory planning algorithm which is explained in detail in Section~\ref{SecGen} (block \console{Send $\tilde{\waypoints}[\disc]$ to trajectory planning} in Figure~\ref{fig:path_planning_diagram}). Since $p_{m+1}$ is not in $\freespace[\disc]$, the point $p_b$ is added at the boundary of $\freespace[\disc]$, in this way $\tilde{\waypoints}[\disc]$ contains only points in $\freespace[\disc]$. Afterwards, $\position_c$ is updated as $\position_b$.


Whenever the map is updated, if the line segments connecting consecutive points in $\waypoints[\disc]$ are not $\occupiedspaceMem[\disc]$ (decision \console{is path in $\occupiedspaceMem[\disc]$ ?} in Figure~\ref{fig:path_planning_diagram}), then $\tilde{\waypoints}[\disc]$ is pushed to the trajectory planning and $\position_c$ is updated as before. Otherwise, a replanning is needed. Hence, a new RRT-Connect solution is obtained from $\position_c$ to $\position_\final$ as shown in Figure~\ref{subfig:second_step} (block \console{Solve~RRT-Connect} in Figure~\ref{fig:path_planning_diagram}). This process continues, exploring new regions as illustrated in Figure~\ref{subfig:third_step}, until the goal is reached. 

\begin{figure}
    \centering
    \def\svgwidth{8.2cm}
\begingroup%
  \makeatletter%
  \providecommand\color[2][]{%
    \errmessage{(Inkscape) Color is used for the text in Inkscape, but the package 'color.sty' is not loaded}%
    \renewcommand\color[2][]{}%
  }%
  \providecommand\transparent[1]{%
    \errmessage{(Inkscape) Transparency is used (non-zero) for the text in Inkscape, but the package 'transparent.sty' is not loaded}%
    \renewcommand\transparent[1]{}%
  }%
  \providecommand\rotatebox[2]{#2}%
  \ifx\svgwidth\undefined%
    \setlength{\unitlength}{215.8008116bp}%
    \ifx\svgscale\undefined%
      \relax%
    \else%
      \setlength{\unitlength}{\unitlength * \real{\svgscale}}%
    \fi%
  \else%
    \setlength{\unitlength}{\svgwidth}%
  \fi%
  \global\let\svgwidth\undefined%
  \global\let\svgscale\undefined%
  \makeatother%
  \begin{picture}(1,1.04021124)%
      \put(0,0){\includegraphics[width=\unitlength,page=1]{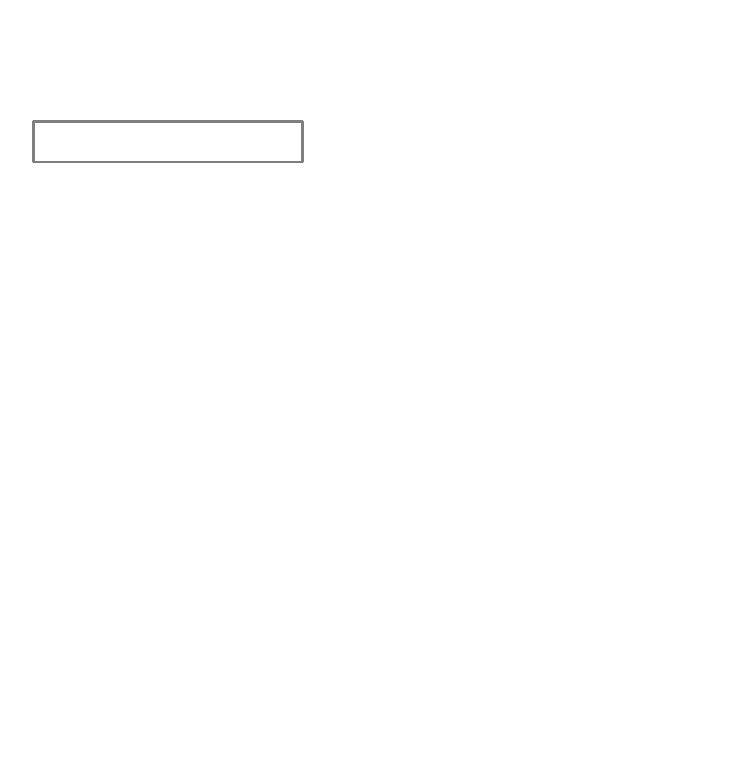}}%
      \put(0,0){\includegraphics[width=\unitlength,page=2]{images/path_diagram.pdf}}%
      \put(0,0){\includegraphics[width=\unitlength,page=3]{images/path_diagram.pdf}}%
      \put(0,0){\includegraphics[width=\unitlength,page=4]{images/path_diagram.pdf}}%
      \put(0,0){\includegraphics[width=\unitlength,page=5]{images/path_diagram.pdf}}%
      \put(0,0){\includegraphics[width=\unitlength,page=6]{images/path_diagram.pdf}}%
      \put(0,0){\includegraphics[width=\unitlength,page=7]{images/path_diagram.pdf}}%
      \put(0,0){\includegraphics[width=\unitlength,page=8]{images/path_diagram.pdf}}%
      \put(0,0){\includegraphics[width=\unitlength,page=9]{images/path_diagram.pdf}}%
      \put(0,0){\includegraphics[width=\unitlength,page=10]{images/path_diagram.pdf}}%
      \put(0,0){\includegraphics[width=\unitlength,page=11]{images/path_diagram.pdf}}%
      \put(0,0){\includegraphics[width=\unitlength,page=12]{images/path_diagram.pdf}}%
      \put(0,0){\includegraphics[width=\unitlength,page=13]{images/path_diagram.pdf}}%
      \put(0,0){\includegraphics[width=\unitlength,page=14]{images/path_diagram.pdf}}%
      \put(0,0){\includegraphics[width=\unitlength,page=15]{images/path_diagram.pdf}}%
      \put(0,0){\includegraphics[width=\unitlength,page=16]{images/path_diagram.pdf}}%
      \footnotesize{
                    \put(0.06473312,0.83823349){\color[rgb]{0,0,0}\makebox(0,0)[lb]{\smash{Solve RRT-Connect}}}%
                    \put(0.03205264,0.70090511){\color[rgb]{0,0,0}\makebox(0,0)[lb]{\smash{Get sub-path in $\freespace[\disc]$}}}%
                    \put(0.16268022,0.26541042){\color[rgb]{0,0,0}\makebox(0,0)[lb]{\smash{is $\position_\final$ in }}}%
                    \put(0.16268022,0.21541042){\color[rgb]{0,0,0}\makebox(0,0)[lb]{\smash{$\freespace[\disc]$ ?}}}%
                    \put(0.1936794,0.01611948){\color[rgb]{0,0,0}\makebox(0,0)[lb]{\smash{end}}}%
                    \put(0.13615787,0.5855){\color[rgb]{0,0,0}\makebox(0,0)[lb]{\smash{Send $\tilde{\waypoints}[\disc]$ to}}}%
                    \put(0.07015787,0.55){\color[rgb]{0,0,0}\makebox(0,0)[lb]{\smash{trajectory planning}}}%
                    \put(0.74550852,0.72094679){\color[rgb]{0,0,0}\makebox(0,0)[lb]{\smash{is path in }}}%
                    \put(0.75050852,0.67594679){\color[rgb]{0,0,0}\makebox(0,0)[lb]{\smash{$\occupiedspaceMem[\disc]$ ?}}}%
                    \put(0.74257776,0.25042703){\color[rgb]{0,0,0}\makebox(0,0)[lb]{\smash{Wait \textbf{\textit{Map}}}}}%
                    \put(0.77257776,0.2152703){\color[rgb]{0,0,0}\makebox(0,0)[lb]{\smash{update}}}%
                    \put(0.1651021,0.44091741){\color[rgb]{0,0,0}\makebox(0,0)[lb]{\smash{$\position_\current \leftarrow \position_b$}}}%
    }
    \scriptsize{
            \put(0.52010617,0.2364095){\color[rgb]{0,0,0}\makebox(0,0)[lb]{\smash{false}}}%
            \put(0.19994174,0.08691667){\color[rgb]{0,0,0}\makebox(0,0)[lb]{\smash{true}}}%
            \put(0.52010617,0.7009468){\color[rgb]{0,0,0}\makebox(0,0)[lb]{\smash{true}}}%
            \put(0.52010617,0.84067467){\color[rgb]{0,0,0}\makebox(0,0)[lb]{\smash{false}}}%
            \put(0.17545611,0.97192142){\color[rgb]{0,0,0}\makebox(0,0)[lb]{\smash{$\position_\current \leftarrow \position_\start$}}}%
            \put(0.20683642,0.94006345){\color[rgb]{0,0,0}\makebox(0,0)[lb]{\smash{$\position_\final$}}}%
            \put(0.25636349,0.64430486){\color[rgb]{0,0,0}\makebox(0,0)[lb]{\smash{$\tilde{\waypoints}[\disc]$}}}%
    }
    \end{picture}%
\endgroup%

\caption{Flow diagram of the motion planning algorithm}
\label{fig:path_planning_diagram}
\end{figure}

\subsubsection{RRT-Connect}

An RRT-Connect algorithm is used to generate a sequence of points inside $\navigationspace[\disc]$ connecting the initial and final positions. It receives as input the field of view $\frustum[\disc]$ of the robot, the initial and final positions and $\navigationspace[\disc]$ defined by the map representation and uses two separate trees, $\Tree_\current$ and $\Tree_\final$ initialized at the current position $\position_\current$ and final position $\position_\final$, respectively. Both trees are extended simultaneously as an RRT*~\cite{Karaman2011Sampling-basedPlanning}, i.e. a random point is obtained from $\volume$ and rejected if it is not in $\navigationspace[\disc]$, otherwise an attempt is made to connect it with the tree while minimizing a cost function. In this work, we use the Euclidean distance as cost function.

Having a tree sampling inside $\frustum[\disc]$ ensures that the navigation is always inside $\frustum[\disc]$. To see why this is of paramount importance consider a situation where the goal position is above the starting position with an obstacle in-between; the robot is a quadcopter with a front-facing camera. For this scenario, a naive approach, based on a single RRT*, would create a path in a straight line from the starting position to the goal, leading toward the obstacle above the robot, but since the trajectory for the pitch is a function of the $x$, $y$, $z$, and $\yaw$ trajectories, then the robot will not discover the obstacle before a collision. In our approach, the robot wouldn't be allowed to fly directly towards the goal. Instead it would have been forced to navigate inside $\frustum[\disc]$, discovering the obstacle in the process. 

The $\Tree_\current$ tree has the peculiarity that the sampling region to generate random points is confined to $\freespace[\disc]$. Nodes in $\Tree_\current$ are ordered by distance to the robot, such that farthest nodes are checked first when trying to connect $\Tree_\final$ with $\Tree_\current$. This critical feature forces the robot to always move in its field of view even in a previously explored area, which is essential due to uncertainty introduced by the sensors.


Figure \ref{subfig:first_step} illustrates both trees, $\Tree_\final$ with blue lines and nodes, and $\Tree_\current$ with green lines and nodes lying in $\frustum[\disc]$, which is represented by the purple polygon. Finally the solution is shown with red lines.

\subsubsection{Collision test during planning} Besides point cloud processing, collision detection during planning is the bottle neck in our framework. The basic operation of RRT-Connect algorithm is the random sampling of the map space in search of new  points that can be joined to an existing node in the search tree via a straight line, without intersecting occupied voxels in the map. Our algorithm for collision detection takes advantage of the octree structure for efficiency and it is presented in Algorithm \ref{alg:collision}. It receives the \console{line} to be collision tested and the \console{key} of a voxel containing both ends of the \console{line}. This algorithm tests recursively for collisions in all levels starting from \console{key.level} up to $\maxlevel$.


\begin{algorithm}
\label{alg:collision}
\small{
\SetAlgoLined

  \eIf{\console{key.level} == $\maxlevel$}
  {
        $v\leftarrow$  \console{getVoxel(key)}
    
        \eIf{\console{line} intersects with $v$ {\bf and} $v\nsubseteq\navigationspace[k]$}
        {
            {\bf return} true
        }
        {
            {\bf return} false
        }
   
   }{
        \console{children}$\leftarrow$\console{getChildren(key)}
    
        \For{{\bf each} \console{child} {\bf in} \console{children}}
        {
            \If{\console{detectCollision(child,line)}}
            {
                {\bf return} true   
            }
        }
        {\bf return} false
  }
}
 \caption{\console{detectCollision(key,line)}}
\end{algorithm}

Fig.~\ref{fig:lines_comparison} presents a benchmark comparison between Algorithm~\ref{alg:collision}, which exploits our map representation, and the naive way of collision checking on the same map by testing every voxel between the two points defining a line. Fig.~\ref{fig:lines_comparison} shows the average computing time for 10000 lines with lengths between 5 to 400 voxels in the same random forest environment described in Section \ref{sec:results}. In contrast with the naive method, which has a computation time that grows close to linearly with the size of the line, the computation time remains nearly flat with our map representation and Algorithm~\ref{alg:collision} as it mainly depends on the depth $\maplevel$ of the linear octree. Note that the expected length between two random points in this map is 165 voxels.

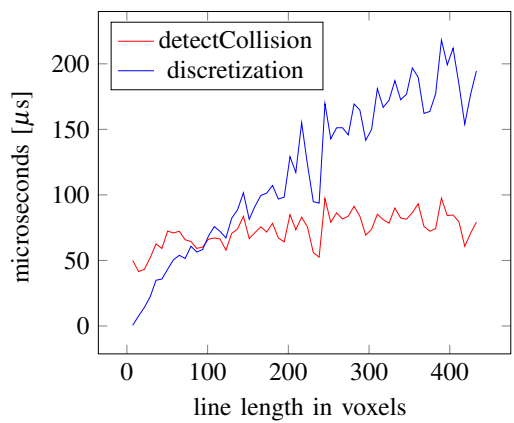
\begin{figure}
\centering
\begin{tikzpicture}[scale=0.8, transform shape]

    \begin{axis}[
        xlabel= $\text{line length in voxels } $,
        ylabel=$\text{microseconds [}\mu\text{s] }$,
        legend pos=north west]
    \addplot[red] plot coordinates {
       ( 1.4434/0.2 , 49.977) 
( 2.8868/0.2 , 41.555) 
( 4.3301/0.2 , 43.171) 
( 5.7735/0.2 , 52.098) 
( 7.2169/0.2 , 62.576) 
( 8.6603/0.2 , 59.147) 
( 10.104/0.2 , 72.371) 
( 11.547/0.2 , 71.009) 
( 12.99/0.2 , 72.225) 
( 14.434/0.2 , 65.798) 
( 15.877/0.2 , 64.441) 
( 17.321/0.2 , 59.182) 
( 18.764/0.2 , 60.18) 
( 20.207/0.2 , 66.193) 
( 21.651/0.2 , 67.13) 
( 23.094/0.2 , 66.35) 
( 24.537/0.2 , 57.826) 
( 25.981/0.2 , 70.616) 
( 27.424/0.2 , 74.059) 
( 28.868/0.2 , 83.64) 
( 30.311/0.2 , 66.617) 
( 31.754/0.2 , 71.343) 
( 33.198/0.2 , 75.607) 
( 34.641/0.2 , 71.604) 
( 36.084/0.2 , 78.323) 
( 37.528/0.2 , 67.079) 
( 38.971/0.2 , 64.269) 
( 40.415/0.2 , 84.904) 
( 41.858/0.2 , 73.476) 
( 43.301/0.2 , 83.05) 
( 44.745/0.2 , 75.463) 
( 46.188/0.2 , 56.009) 
( 47.631/0.2 , 52.513) 
( 49.075/0.2 , 97.188) 
( 50.518/0.2 , 79.136) 
( 51.962/0.2 , 86.313) 
( 53.405/0.2 , 81.637) 
( 54.848/0.2 , 83.735) 
( 56.292/0.2 , 91.259) 
( 57.735/0.2 , 83.654) 
( 59.178/0.2 , 69.333) 
( 60.622/0.2 , 73.704) 
( 62.065/0.2 , 85.275) 
( 63.509/0.2 , 81.058) 
( 64.952/0.2 , 78.411) 
( 66.395/0.2 , 90.007) 
( 67.839/0.2 , 82.338) 
( 69.282/0.2 , 81.465) 
( 70.725/0.2 , 86.399) 
( 72.169/0.2 , 93.241) 
( 73.612/0.2 , 75.93) 
( 75.056/0.2 , 72.295) 
( 76.499/0.2 , 74.265) 
( 77.942/0.2 , 97.539) 
( 79.386/0.2 , 84.184) 
( 80.829/0.2 , 84.697) 
( 82.272/0.2 , 79.45) 
( 83.716/0.2 , 60.815) 
( 85.159/0.2 , 70.765) 
( 86.603/0.2 , 79.273) 
    };
    \addlegendentry{detectCollision}

     \addplot[color=blue]
         plot coordinates {
             ( 1.4434/0.2 , 0.48228) 
( 2.8868/0.2 , 7.5775) 
( 4.3301/0.2 , 14.154) 
( 5.7735/0.2 , 22.295) 
( 7.2169/0.2 , 34.896) 
( 8.6603/0.2 , 35.779) 
( 10.104/0.2 , 43.34) 
( 11.547/0.2 , 50.548) 
( 12.99/0.2 , 53.913) 
( 14.434/0.2 , 51.509) 
( 15.877/0.2 , 60.812) 
( 17.321/0.2 , 56.408) 
( 18.764/0.2 , 58.312) 
( 20.207/0.2 , 68.59) 
( 21.651/0.2 , 75.817) 
( 23.094/0.2 , 72.14) 
( 24.537/0.2 , 67.212) 
( 25.981/0.2 , 82.272) 
( 27.424/0.2 , 88.691) 
( 28.868/0.2 , 101.624) 
( 30.311/0.2 , 81.438) 
( 31.754/0.2 , 91.438) 
( 33.198/0.2 , 99.618) 
( 34.641/0.2 , 101.321) 
( 36.084/0.2 , 107.197) 
( 37.528/0.2 , 96.85) 
( 38.971/0.2 , 98.271) 
( 40.415/0.2 , 129.142) 
( 41.858/0.2 , 117.178) 
( 43.301/0.2 , 155.101) 
( 44.745/0.2 , 123.113) 
( 46.188/0.2 , 94.849) 
( 47.631/0.2 , 93.734) 
( 49.075/0.2 , 169.581) 
( 50.518/0.2 , 142.855) 
( 51.962/0.2 , 151.201) 
( 53.405/0.2 , 151.269) 
( 54.848/0.2 , 145.828) 
( 56.292/0.2 , 169.355) 
( 57.735/0.2 , 164.662) 
( 59.178/0.2 , 141.731) 
( 60.622/0.2 , 149.881) 
( 62.065/0.2 , 181.025) 
( 63.509/0.2 , 166.778) 
( 64.952/0.2 , 172.153) 
( 66.395/0.2 , 187.115) 
( 67.839/0.2 , 172.621) 
( 69.282/0.2 , 176.828) 
( 70.725/0.2 , 196.824) 
( 72.169/0.2 , 189.808) 
( 73.612/0.2 , 162.185) 
( 75.056/0.2 , 163.643) 
( 76.499/0.2 , 177.065) 
( 77.942/0.2 , 218.144) 
( 79.386/0.2 , 199.282) 
( 80.829/0.2 , 211.893) 
( 82.272/0.2 , 184.516) 
( 83.716/0.2 , 153.933) 
( 85.159/0.2 , 176.651) 
( 86.603/0.2 , 194.622) 
         };
     \addlegendentry{discretization}
       
    \end{axis}
 
\end{tikzpicture}
 \caption{Benchmark of collision detection using Algorithm~\ref{alg:collision} against a discretization-based collision detection}
\label{fig:lines_comparison}
\end{figure}

 
\subsection{Trajectory Generation} \label{SecGen}

The solution of the path planning algorithm is used for the generation of dynamically feasible trajectories. The proposed trajectory generation algorithm is described in this section. Given that the model of the UAV is differentially flat, the dynamics for each component of the position and yaw are considered to be decoupled fourth and second order integrator chains, respectively~\cite{Mellinger2011}.

The given path is specified in terms of the sets of waypoints $\tilde{\waypoints}[\disc]$ that the path planning has generated and pushed to a stack $H_K = \{h_1,\dots,h_K\}$ where $h_i\in\mathbb{R}^4$ consist of the $i$-th position and orientation point, as illustrated in Figure \ref{fig:path_planning_diagram}. In this section, we propose an algorithm to traverse the path while navigating in the region $\sallowed[\disc]$ given in~\eqref{Eq:Zsafe}. The trajectory generation guarantees that the dynamic constraints are satisfied while allowing an a priori defined maximum separation $E_p$ from the path (notice that the regions definition given in Figure~\ref{fig:regions} takes into account $E_p$) and a maximum deviation $E_\yaw$ from the desired orientation $\yaw_i$ at any waypoint in the trajectory. The latter constraint ensures that during navigation the next waypoint is always inside the field of view of the robot from the last waypoint reached. The former constraint guarantees that the position $\position(\timme)$ remains inside $\mathcal{V}_{\text{safe}}[\disc]$, as illustrated in Fig.~\ref{fig:image_filter}, where $k = \lfloor (t~-~t_i) \framerate \rfloor$.


Our method consists of two components. The first one is a virtual control (for the snap of each of the spatial coordinates and the yaw's angular acceleration) 
that induces an asymptotically stable equilibrium point $z^*=[\eta^T,0,\dots, 0]^T\in\mathbb{R}^{14}$ which has an associated basin of attraction around $z^*$. 
In particular, given the requirement of maintaining the trajectory $z(t)$ inside $\sallowed[\disc]$, we can find the set $\ellipse(\timme)\subset\sallowed[\disc]$ of points around $z^*$ satisfying this requirement, such that the trajectory $z(t)$ passing through $z_0\in\ellipse(\timme)$ at $t=t_0$ cannot leave $\ellipse(\timme)$ for $t\geq t_0$. Moreover, the possible values for the equilibrium points are restricted to lie in the line segments connecting consecutive waypoints in $H_K$. Thus, $\eta$ is parametrized by $s$ where $s\in\mathbb{R}$ and $\eta(S)$ is the end of the path. 

The second component of the trajectory generation provides a dynamic for $s(t)$ to evolve $\eta(s(t))$  (thus moving the equilibrium point $z^*$) along the path defined by $H_K$, such that $\dot{s}(t)>0$ (thus, moving forward along the path) as long as the trajectory $z(t)$ is in the interior of $\ellipse(\timme)$, and $\dot{s}(t)$ goes to zero as $z(t)$ approaches the boundary of $\ellipse(\timme)$. Assuming that at the beginning, $z(0)=[\eta(s(0))^T,0,\dots,0]\in\ellipse(\timme)$ then, $s(t)$ is a monotonically increasing function and evolves guaranteeing that $z(t)\in\ellipse(\timme)\subset\sallowed[\disc]$, $\forall t\geq 0$.

\begin{figure}
    \centering
    \def\svgwidth{8.5cm}
\begingroup%
  \makeatletter%
  \providecommand\color[2][]{%
    \errmessage{(Inkscape) Color is used for the text in Inkscape, but the package 'color.sty' is not loaded}%
    \renewcommand\color[2][]{}%
  }%
  \providecommand\transparent[1]{%
    \errmessage{(Inkscape) Transparency is used (non-zero) for the text in Inkscape, but the package 'transparent.sty' is not loaded}%
    \renewcommand\transparent[1]{}%
  }%
  \providecommand\rotatebox[2]{#2}%
  \ifx\svgwidth\undefined%
    \setlength{\unitlength}{245.02282604bp}%
    \ifx\svgscale\undefined%
      \relax%
    \else%
      \setlength{\unitlength}{\unitlength * \real{\svgscale}}%
    \fi%
  \else%
    \setlength{\unitlength}{\svgwidth}%
  \fi%
  \global\let\svgwidth\undefined%
  \global\let\svgscale\undefined%
  \makeatother%
  \begin{picture}(1,0.68206692)%
      \put(0,0){\includegraphics[width=\unitlength,page=1]{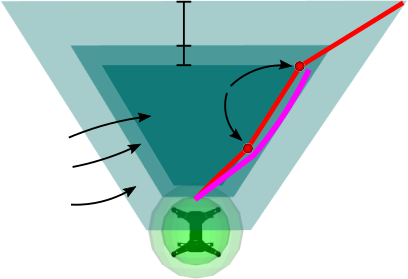}}%
      
      \normalsize{
                \put(0.46795721,0.45639467){\color[rgb]{0,0,0}\makebox(0,0)[lb]{\smash{$\tilde{\waypoints}[\disc]$}}}%
                \put(0.69055924,0.33736307){\color[rgb]{0,0,0}\makebox(0,0)[lb]{\smash{$\position(\timme)$}}}%
               
                \put(0.37438041,0.61613427){\color[rgb]{0,0,0}\makebox(0,0)[lb]{\smash{$\text{r}$}}}%
                \put(0.37405137,0.535269884){\color[rgb]{0,0,0}\makebox(0,0)[lb]{\smash{$E_p$}}}%
                
                \put(0.05,0.330379){\color[rgb]{0,0,0}\makebox(0,0)[lb]{\smash{$\freespace[\disc]$}}}%
                \put(0.05,0.25735888){\color[rgb]{0,0,0}\makebox(0,0)[lb]{\smash{$\mathcal{V}_{\text{safe}}[\disc]$}}}%
                \put(0.05,0.17167986){\color[rgb]{0,0,0}\makebox(0,0)[lb]{\smash{$\castspace[\disc]$}}}%

    }
    \end{picture}%
\endgroup%
    \caption{Illustration of trajectory generation allowing a deviation of $E_p$ from the path}
    \label{fig:image_filter}
\end{figure}

\begin{table*}[]
\centering
\begin{tabular}{c|ccccccc}
\hline
& \textbf{\begin{tabular}[c]{@{}c@{}}Oleynikova\\  et. al. \cite{Oleynikova2018SafeVehicles}\end{tabular}} & \textbf{\begin{tabular}[c]{@{}c@{}}Gao\\ et.  al. \cite{Gao2018OnlinePolynomial}\end{tabular}}  & \textbf{\begin{tabular}[c]{@{}c@{}}Florence\\ et.  al. \cite{Florence2018NanoMap:Data}\end{tabular}} & \textbf{\begin{tabular}[c]{@{}c@{}}Usenko\\ et.  al. \cite{Usenko2017Real-TimeBuffer}\end{tabular}} & \textbf{\begin{tabular}[c]{@{}c@{}}Mohta\\ et al. \cite{Mohta2018FastEnvironments}\end{tabular}}          & \textbf{\begin{tabular}[c]{@{}c@{}}Lin\\ et al.\cite{Lin2018AutonomousFusion} \end{tabular}}       & \textbf{Ours}                                                    \\ \hline
\textbf{\begin{tabular}[c]{@{}c@{}}Navigation\\ Objectives\end{tabular}}          & Goal                                                                   & Goal       & Goal                                                                 & Goal                                                               & Goal                                                                     & Goal                                                                & Goal                                                             \\
\rowcolor[HTML]{EFEFEF} 
\textbf{\begin{tabular}[c]{@{}c@{}}Navigation\\ in  FoV\end{tabular}}             & Yes                                                                    & No                                                               & Yes                                                                  & No                                                                 & No                                                                       & No                                                                  & Yes                                                              \\
\textbf{\begin{tabular}[c]{@{}c@{}}Feasible\\ Yaw\\ Dynamics\end{tabular}}        & No                                                                     & No                                                               & Yes                                                                  & No                                                                 & No                                                                       & No                                                                  & Yes                                                              \\
\rowcolor[HTML]{EFEFEF} 
\textbf{\begin{tabular}[c]{@{}c@{}}Feasible\\ Positional\\ Dynamics\end{tabular}} & Yes                                                                    & Yes                                                              & Yes                                                                  & Yes                                                                & Yes                                                                      & No                                                                 & Yes                                                              \\
\textbf{Map  Type}                                                                & \begin{tabular}[c]{@{}c@{}}Voxel Hashing\\ TSDF \& ESDF\end{tabular}   & \begin{tabular}[c]{@{}c@{}}Regular ESDF\\ Grid Map\end{tabular}  & \begin{tabular}[c]{@{}c@{}}Search over\\ views\end{tabular}          & \begin{tabular}[c]{@{}c@{}}Egocentric\\ Grid\end{tabular}          & \begin{tabular}[c]{@{}c@{}}3D local map and\\ 2D global map\end{tabular} & TSDF                                                                & Linear Octree                                                    \\
\rowcolor[HTML]{EFEFEF} 
\textbf{\begin{tabular}[c]{@{}c@{}}Escaping\\ Pockets\end{tabular}}               & Yes                                                                    & Yes                                                              & No                                                                   & No                                                                 & Yes                                                                      & Yes                                                                 & Yes                                                              \\
\textbf{\begin{tabular}[c]{@{}c@{}}Hardware\\ Characteristics\end{tabular}}       & \begin{tabular}[c]{@{}c@{}}2.4 GHz\\ i7 dual-core\end{tabular}         & \begin{tabular}[c]{@{}c@{}}3.00 GHz i7\\ Nvidia TX1\end{tabular} & i7 dual-core                                                         & \begin{tabular}[c]{@{}c@{}}2.1 GHz\\ i7 dual-core\end{tabular}     & \begin{tabular}[c]{@{}c@{}}3.40 GHz\\ Intel i7-5557U\end{tabular}        & \begin{tabular}[c]{@{}c@{}}Intel i7-5500U\\ Nvidia TX1\end{tabular} & \begin{tabular}[c]{@{}c@{}}1.6 GHz\\ Atom x7-Z8750\end{tabular}  \\
\rowcolor[HTML]{EFEFEF} 
\textbf{\begin{tabular}[c]{@{}c@{}}Operation\\ Frequency\end{tabular}}            & 4Hz                                                                    & 10Hz                                                             & 200-250Hz                                                            & N/A                                                                & 3Hz                                                                      & 4 Hz                                                                & 33Hz                                                             \\
\textbf{\begin{tabular}[c]{@{}c@{}}Code\\ Available\end{tabular}}                 & Not Complete                                                           & Yes                                                              & Not Complete                                                         & Yes                                                                & No                                                                       & No                                                                  & \begin{tabular}[c]{@{}c@{}}Will be made\\ available\end{tabular} \\ \hline
\end{tabular}
\caption{Qualitative Comparison of state-of-the-art methods for navigation in Unknown Environments. Continues on Table~\ref{tab:comparisonII}}
\label{tab:comparison}
\end{table*}


\begin{theorem}
Let $\dot{\position}(\timme)=\velocity(\timme)$, $\dot{\velocity}(\timme)=\acceleration(\timme)$, $\dot{\acceleration}(\timme) = \jerk(\timme)$, $\dot{j}(\timme) = \snap(\timme)$, $\dot{\yaw}(\timme)=\angvelocity(\timme)$, $\dot{\angvelocity}(\timme) = \angacceleration(\timme)$ be the dynamics of the center point of a robot. Let $\eta(s(t))=[\eta_x(s(t)),\eta_y(s(t)),\eta_z(s(t)),\eta_\yaw(s(t))]^T$ represent the parametrization of the line segments connecting consecutive points in $H_K$ and let $z^*(t)=[\eta(s(t)),0,\dots,0]^T$. Therefore, if $z(0)=z^*(0)$ then for the given $E_p$ and $E_\yaw$,  by using
\begin{align}
    \snap(t) =& k_1(\position-[\eta_x(s(t)),\eta_y(s(t)),\eta_z(s(t))]^T)\\
    &+k_2\velocity(t)+k_3\acceleration(t)+k_4\jerk(t)\\
    \angacceleration(t) =& k_5(\yaw(t) - \eta_\yaw(s(t)))+k_6\angvelocity(t)\\
    \dot{s}(t) =& \max\{\rho-(z(t)-z^*(t))^TP(z(t)-z^*(t)) ,0\}
\end{align}
where  $z(t) = [p(t)^T,\yaw(t),v(t)^T,\omega(t),a(t)^T,j(t)^T]^T$; $A$ is such that with the above definition of $\snap$ and $\angacceleration$, $\dot{z} = Az\big\rvert_{\eta(s)=0}$ holds; $P$ satisfies $PA+A^TP=-I$; and $\rho>0$ is chosen such that the hyper-ellipse $\mathcal{E}(t)=\{z:(z-z^*(t))^TP(z-z^*(t))\leq\rho\}$ lies inside the set
\begin{align}
\mathcal{Z}_{\eta}(t)=\{\state\ :&|\yaw(t)-\eta_\yaw(s(t))|\leq E_\yaw,\\
&\|p(t)-[\eta_x(s(t)),\eta_y(s(t)),\eta_z(s(t))]^T\|_\infty\leq E_p, \\
&\|v(t)\|_\infty\leq\vmax,\  
\|a(t)\|_\infty\leq\amax,\\ 
&\|j(t)\|_\infty\leq\jmax,\
|\omega(t)|\leq\omegamax\}, 
\end{align}
$z(t)$ is maintained inside $\mathcal{Z}_{\eta}(t)\subset\sallowed[\disc]$ for all $t\geq 0$ and asymptotically reaches $[\eta(S),0,\dots,0]^T$. 
\end{theorem}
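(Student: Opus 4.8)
The plan is to pass to error coordinates $e(\timme)\eqdef\state(\timme)-\state^*(\timme)$, build a quadratic Lyapunov function whose $\rho$-sublevel set is exactly the hyper-ellipse $\mathcal{E}(\timme)$, show this set is forward invariant, and then exploit the coupling between $\dot{\arc}$ and that Lyapunov function to drive $\arc$ to the end of the path. First I would record the closed-loop dynamics. Because the feedback defining $\snap$ acts on each group $(\position_x,\velocity_x,\acceleration_x,\jerk_x)$ separately (and similarly for $y,z$) and the feedback defining $\angacceleration$ acts only on $(\yaw,\angvelocity)$, the matrix $A$ is block diagonal with three copies of a fourth-order companion block and one second-order block, and the hypothesis that $P$ solves $PA+A^\trans P=-I$ forces each block (hence $A$) to be Hurwitz and $P\succ0$. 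A short computation shows that the $\pathplan$-dependent terms appearing in $\dot{\jerk}$ and $\dot{\angvelocity}$ sum to exactly $-A\state^*$, so $\dot{\state}=A(\state-\state^*)$; since $\dot{\state}^*=[\pathplan'(\arc)\dot{\arc},0,\dots,0]^\trans$, the error then satisfies
\begin{equation}
\dot{e}(\timme)=Ae(\timme)-\dot{\arc}(\timme)\,g(\arc(\timme)),\qquad g(\arc)\eqdef[\pathplan'(\arc),0,\dots,0]^\trans,
\end{equation}
where $g$ is piecewise constant with jumps only at the finitely many waypoints of $H_K$, so $e$ is taken in the Carath\'eodory sense and $\|g\|$ is bounded.

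Next I would take $V(e)=e^\trans Pe$ and compute, for almost every $\timme$,
\begin{equation}
\dot{V}=e^\trans(A^\trans P+PA)e-2\dot{\arc}\,g^\trans Pe=-\|e\|^2-2\dot{\arc}\,g^\trans Pe.
\end{equation}
The decisive feature is that the law for $\arc$ is designed so that $\dot{\arc}=0$ exactly when $V(e)=\rho$; hence on the boundary $\{e:V(e)=\rho\}$ one has $\dot{V}=-\|e\|^2\leq-\rho/\lambda_{\max}(P)<0$, which is a standard barrier certificate. Since $V(e(0))=0<\rho$ it follows that $V(e(\timme))<\rho$ for all $\timme\geq0$, i.e. $\state(\timme)\in\mathcal{E}(\timme)$. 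By the choice of $\rho$ we have $\mathcal{E}(\timme)\subseteq\mathcal{Z}_{\eta}(\timme)$ (which is feasible because $\state^*(\timme)$ is an interior point of $\mathcal{Z}_{\eta}(\timme)$, using $E_p,E_\yaw,\vmax,\amax,\jmax,\omegamax>0$), and $\mathcal{Z}_{\eta}(\timme)\subset\sallowed[\disc]$ follows from the definitions of the regions in Figure~\ref{fig:regions}, which are arranged precisely so that a position deviating by at most $E_p$ from $\pathplan(\arc)\in\navigationspace[\disc]$ still lies in $\mathcal{V}_{\text{safe}}[\disc]$, the kinematic bounds being identical to those of \eqref{Eq:Zsafe}. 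This yields the invariance conclusion.

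For the convergence conclusion, note that $\dot{\arc}(\timme)=\rho-V(e(\timme))>0$ for all $\timme$, so $\arc$ is strictly increasing; extending the path by $\pathplan(\arc)\equiv\pathplan(S)$ for $\arc\geq S$, it suffices to show $\arc(\timme)\to\infty$. Suppose instead $\arc(\timme)\to\arc_\infty<\infty$. Then $\int_0^\infty\dot{\arc}\,d\timme<\infty$, and $\dot{\arc}$ is Lipschitz since its derivative $-\dot{V}=\|e\|^2+2\dot{\arc}\,g^\trans Pe$ is bounded (using $\|e\|^2\leq\rho/\lambda_{\min}(P)$), so Barbalat's lemma gives $\dot{\arc}(\timme)\to0$, i.e. $V(e(\timme))\to\rho$, whence $\|e(\timme)\|^2\geq V(e(\timme))/\lambda_{\max}(P)$ is bounded away from zero; since the forcing term $2\dot{\arc}\,g^\trans Pe\to0$, we obtain $\dot{V}(\timme)\leq-c<0$ for all large $\timme$, contradicting $V\geq0$. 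Therefore $\arc(\timme)\geq S$ for all $\timme\geq\timme^*$ for some $\timme^*$; for such $\timme$, $\pathplan$ is constant so $g\equiv0$ and $\dot{e}=Ae$ with $A$ Hurwitz, whence $e(\timme)\to0$ exponentially and $\state(\timme)=\state^*(\timme)+e(\timme)\to[\pathplan(S),0,\dots,0]^\trans$.

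The routine ingredients are checking that the chosen gains make each block of $A$ Hurwitz and that some $\rho>0$ gives $\mathcal{E}\subseteq\mathcal{Z}_{\eta}$; both are elementary. I expect the main obstacle to be the convergence step: one must rule out the path parameter $\arc$ stalling short of $S$, which is precisely where the clipped form of $\dot{\arc}$ and the Barbalat-type interplay between $\dot{\arc}\to0$ and $V\to\rho$ are needed. Some care is also required because $\pathplan$ is only piecewise $C^1$, so the error system is discontinuous at the waypoints and the whole argument must be read in the Carath\'eodory sense.
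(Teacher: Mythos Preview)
Your proof follows essentially the same route as the paper: pass to error coordinates $e=z-z^*$, take $V=e^\trans Pe$, and use that $\dot{s}=0$ on $\partial\mathcal{E}$ so that the cross term in $\dot V$ vanishes there and $\dot V=-\|e\|^2<0$ forces the trajectory back into the interior; once $\eta$ sits at the endpoint, $\dot e=Ae$ with $A$ Hurwitz gives asymptotic convergence. The main difference is that you are considerably more careful on the convergence step: the paper simply asserts that ``eventually, $s$ will be such that $\eta(s(t))$ is at the end of the path'' without justification, whereas you give a genuine argument---strict invariance $V<\rho$ yields $\dot s>0$, and a Barbalat-type contradiction rules out $s\to s_\infty<S$. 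You also make explicit the block-diagonal Hurwitz structure of $A$ and treat the piecewise-linear $\eta$ in the Carath\'eodory sense, both of which the paper leaves implicit. In short, same skeleton, but your version closes gaps the paper glosses over.
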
 
\begin{proof}
Notice that $z(0)\in\mathcal{E}(0)$, moreover while $z(t)\in\mathcal{E}(t)$, $\dot{s}\geq0$ and $\eta(s)$ progresses along the path. Let $e(t)=z(t)-z^*(t)$ with $\dot{e} = Ae - \dot{z}^*$. Consider the Lyapunov function candidate $V = e^TPe$. Therefore, $\dot{V} = e^T(PA+A^TP)e + 2e^TP\dot{z}^*= -e^Te + 2e^TP\dot{z}^*$ and the value of $V$ decreases for $2e^TP\dot{z}^*(t)\leq 0$. 

Suppose that, due to the evolution of $s(t)$ and as a consequence the evolution of $z^*(t)$, $z(t)$ evolves from $z(t_1)\in\mathcal{E}(t_1)$ to $z(t_2)$ in the boundary of $\mathcal{E}(t_2)$, i.e. where $e(t_2)^TPe(t_2)=\rho$, with $e(t)^TPe(t)\leq\rho, \forall t\in[t_1,t_2]$. This implies that, as $z(t)$ approaches the boundary of $\mathcal{E}$ the velocity of $s(t)$ decreases, until $\dot{s}(t_2)=0$. Thus, $2e^T(t_2)P\dot{z}^*(t_2) = 2e^T(t_2)P\left[\frac{d}{ds}\eta(s)\big\rvert_{s=s(t_2)},0,\dots,0\right]\dot{s}(t_2) = 0$. Hence, $V$ is decreasing at $t=t_2$ and $z(t)$ will move toward the interior of $\mathcal{E}(t)$ where $\dot{s}(t)>0$. Thus, $\eta(s)$ progresses along the path. Therefore, the condition $V<\rho$ will remain. If eventually, for a time $t_3$,  $e^T(t_3)Pe^T(t_3)=\rho$, then the process is repeated and $\eta(s(t))$ progresses along the path satisfying $z(t)\in\mathcal{E}(t)$. Eventually, $s$ will be such that $\eta(s(t))$ is at the end of the path and remains there i.e. $\eta(s(t))$ remain constant, under this scenario $\dot{V} = e^T(PA+A^TP)e = -e^Te<0$ for all $z\neq [\eta(S),0,\dots,0]^T$. Thus, $z(t)$ will approach $[\eta(S),0,\dots,0]^T$ asymptotically.
\end{proof}

This method is similar to contouring control~\cite{Liniger2015Optimization-basedCars} in that the path is traversed by continuously making a trade-off between separation from the path and velocity. However, unlike existing contouring methods, our approach doesn't require solving an optimization problem online. It is also similar to LQR-Trees~\cite{Tedrake2010LQR-trees:Verification} in that Lyapunov analysis is used to induce ``funnels" along the path. However, our method is significantly simpler since LQR-Trees are computationally equivalent to Kinodynamic planning in $\mathbb{R}^{14}$. Our method on the other hand exploits the fast exploration of RRTs in $\mathbb{R}^3$ to find an obstacle-free path online. This is coupled with an offline stage of LQR virtual control design and Lyapunov analysis to compute the basin of attraction. Thus, given a path, to generate a dynamically feasible trajectory, in the online stage, we only require to evolve the chain of integrators of position and yaw together with evolving $s(t)$, to move the equilibrium point, progressing along the path.

\section{Discussion}

In this section, we present a qualitative comparison, summarized in Table~\ref{tab:comparison} and Table~\ref{tab:comparisonII}, with the state of the art methods. In particular, we focus this discussion in the contributions reported in~\cite{Oleynikova2018SafeVehicles,Gao2018OnlinePolynomial,Cieslewski2017RapidFlight,Papachristos2017Uncertainty-awareRobots,Florence2018NanoMap:Data,Usenko2017Real-TimeBuffer,Selin2019EfficientEnvironments,Mohta2018FastEnvironments,Lin2018AutonomousFusion}, as in our opinion, they represent the closest approaches.
These contributions can be divided into two problems, reaching a goal in an unknown environment~\cite{Oleynikova2018SafeVehicles,Gao2018OnlinePolynomial,Florence2018NanoMap:Data,Usenko2017Real-TimeBuffer,Mohta2018FastEnvironments,Lin2018AutonomousFusion} summarized in Table~\ref{tab:comparison} and exploration of an unknown environment~\cite{Cieslewski2017RapidFlight,Papachristos2017Uncertainty-awareRobots,Selin2019EfficientEnvironments}, summarized in Table~\ref{tab:comparisonII}. The former is the problem addressed in this paper, but since exploration require similar components contributions to both problems are discussed. In fact, the work of \cite{Oleynikova2018SafeVehicles} is an extension to their previous exploration framework by introducing a soft cost for reaching a goal. 

Once a goal has been determined, a critical aspect for collision-free navigation in unknown environments is imposing constraints on the motion plan to navigate within the current field of view, where the robot has the highest confidence on the traversable space. Our method incorporates this idea which has also been used by \cite{Oleynikova2018SafeVehicles,Cieslewski2017RapidFlight,Florence2018NanoMap:Data}. The work of \cite{Gao2018OnlinePolynomial} assumes that the sensors onboard the robot cover an entire spherical region centered at the robot, which is hard to achieve in practice.

Another important aspect related to navigation in the field of view is ensuring that the camera axis, and hence the yaw orientation is aligned with the direction of motion. The work in \cite{Oleynikova2018SafeVehicles,Cieslewski2017RapidFlight} attempts to accomplish this restriction by implementing a velocity tracking yaw approach but they don't guarantee that the generated yaw trajectories will meet angular dynamical constraints.  

Additionally, positional dynamical constraints of the robot must be handled explicitly to guarantee that it will remain in safe regions. In particular, the work in \cite{Cieslewski2017RapidFlight,Selin2019EfficientEnvironments,Lin2018AutonomousFusion} does not handle dynamical constraints explicitly. Moreover,  \cite{Usenko2017Real-TimeBuffer} solve an optimization problem with soft limits on the time derivatives of the position over the trajectory. Finally, \cite{Oleynikova2018SafeVehicles,Gao2018OnlinePolynomial,Papachristos2017Uncertainty-awareRobots,Florence2018NanoMap:Data,Mohta2018FastEnvironments} solve a constrained optimization problem to handle positional dynamical constraints. However, solving optimization problems online is computationally expensive, even in the convex case. In our approach, the dynamical constraints are incorporated without the need of an optimization-based solution.

\begin{table}[]
\centering
\begin{tabular}{c|ccc}
\hline
& \textbf{\begin{tabular}[c]{@{}c@{}}Cieslewski\\ et. al \cite{Cieslewski2017RapidFlightc}\end{tabular}} & \textbf{\begin{tabular}[c]{@{}c@{}}Papachristos\\ et. al. \cite{Papachristos2017Uncertainty-awareRobots}\end{tabular}} & \textbf{\begin{tabular}[c]{@{}c@{}}Selin\\ et. al. \cite{Selin2019EfficientEnvironments}\end{tabular}} \\ \hline
\textbf{\begin{tabular}[c]{@{}c@{}}Navigation\\ Objectives\end{tabular}}          & Exploration                                                            & Exploration                                                              & Exploration                                                       \\
\rowcolor[HTML]{EFEFEF} 
\textbf{\begin{tabular}[c]{@{}c@{}}Navigation\\ in  FoV\end{tabular}}             & Yes                                                                    & No                                                                       & No                                                                \\
\textbf{\begin{tabular}[c]{@{}c@{}}Feasible\\ Yaw\\ Dynamics\end{tabular}}        & No                                                                     & Yes                                                                      & No                                                                \\
\rowcolor[HTML]{EFEFEF} 
\textbf{\begin{tabular}[c]{@{}c@{}}Feasible\\ Positional\\ Dynamics\end{tabular}} & No                                                                     & No                                                                       & No                                                                \\
\textbf{Map  Type}                                                                & Octomap                                                                & Octomap                                                                  & Octomap                                                           \\
\rowcolor[HTML]{EFEFEF} 
\textbf{\begin{tabular}[c]{@{}c@{}}Escaping\\ Pockets\end{tabular}}               & Yes                                                                    & No                                                                       & Yes                                                               \\
\textbf{\begin{tabular}[c]{@{}c@{}}Hardware\\ Characteristics\end{tabular}}       & N/A                                                                    & \begin{tabular}[c]{@{}c@{}}3.4 GHz\\ i7 dual-core\end{tabular}           & N/A                                                               \\
\rowcolor[HTML]{EFEFEF} 
\textbf{\begin{tabular}[c]{@{}c@{}}Operation\\ Frequency\end{tabular}}            & N/A                                                                    & N/A                                                                      & N/A                                                               \\
\textbf{\begin{tabular}[c]{@{}c@{}}Code\\ Available\end{tabular}}                 & No                                                                     & Yes                                                                      & Yes                                                               \\ \hline
\end{tabular}
\caption{Continuation of Table~\ref{tab:comparison}: Qualitative Comparison of state-of-the-art methods for navigation in Unknown Environments}
\label{tab:comparisonII}
\end{table}

The ability to escape ``pockets" or getting out of dead ends is fundamental to complete the navigation task in general cluttered environments. Beyond local collision avoidance, this requires maintaining and keeping an up to date map of the explored areas together with a strategy for handling unexplored regions. Based on this map the robot should be able to generate a motion plan from the current position to the goal. Of the methods reviewed,~\cite{Papachristos2017Uncertainty-awareRobots} is purely local while~\cite{Usenko2017Real-TimeBuffer} uses an egocentric grid of fixed size to create a local map representation around the robot. ~\cite{Florence2018NanoMap:Data, Mohta2018FastEnvironments} use global planning based on A* on a 2D map to guide local exploration. Compared to our solution, the methods in ~\cite{Oleynikova2018AEnvironments,Gao2018OnlinePolynomial,Lin2018AutonomousFusion} are computationally expensive due to the need to generate signed distance fields. As argued in section \ref{SecMapping}, the map type and implementation have a significant impact not only on planning global trajectories but on the overall efficiency of the complete navigation framework.

Regarding the hardware characteristics in which the method was implemented, we can highlight that our approach was demonstrated to work with the lowest processor requirements, while~\cite{Gao2018FlyingEnvironments,Oleynikova2018SafeVehicles,Papachristos2017Uncertainty-awareRobots,Florence2018NanoMap:Data,Usenko2017Real-TimeBuffer,Mohta2018FastEnvironments, Lin2018AutonomousFusion} used high end Intel~ i7 processors and~\cite{Gao2018FlyingEnvironments, Lin2018AutonomousFusion} used an additional Tx1 Nvidia GPU. Moreover, as detailed in the next section we were able to plan and replan trajectories at the camera frame rate ($33\mathrm{Hz}$), while~\cite{Oleynikova2018SafeVehicles} and~\cite{Gao2018FlyingEnvironments} reported an operating frequency of $4\mathrm{Hz}$ and $10\mathrm{Hz}$, respectively. To the best of our knowledge, the platform used in our experiments represents the most frugal platform for autonomous navigation in unknown cluttered environments demonstrated to date.

In the next section, quantitative comparisons with state of the art methods are made. In particular, we focus on the contributions addressing the reaching a goal in an unknown environment problem~\cite{Oleynikova2018SafeVehicles,Gao2018OnlinePolynomial,Florence2018NanoMap:Data,Usenko2017Real-TimeBuffer,Lin2018AutonomousFusion,Mohta2018FastEnvironments}. Unfortunately, from the contributions reviewed in Table~\ref{tab:comparison} focusing on this problem, only ~\cite{Gao2018OnlinePolynomial,Usenko2017Real-TimeBuffer} have an available code. For this reason, we limit our comparison to those works, even though~\cite{Usenko2017Real-TimeBuffer} focuses on a narrowed problem which is based in the solution of a local replanning problem, which assumes the existence of a goal map and a preplanned trajectory.

        

%

\section{Results}\label{sec:results}

The proposed algorithm was implemented using the C++ language and the ROS Kinetic framework~\cite{Quigley2009ROS:System}. The benchmark results are shown comparing the proposed algorithm with state-of-the-art navigation algorithms in several simulated Poisson forest scenarios generated using the method in~\cite{Karaman2012High-speedForest}. The proposed algorithm was also tested in a simulated maze environment and several real-world scenarios. See the video of the experimental results at~\url{https://youtu.be/Wq0e7vF6nZM}.

\begin{figure*}
    \centering
    \subfloat{
    \def\svgwidth{4.9cm}
\begingroup%
  \makeatletter%
  \providecommand\color[2][]{%
    \errmessage{(Inkscape) Color is used for the text in Inkscape, but the package 'color.sty' is not loaded}%
    \renewcommand\color[2][]{}%
  }%
  \providecommand\transparent[1]{%
    \errmessage{(Inkscape) Transparency is used (non-zero) for the text in Inkscape, but the package 'transparent.sty' is not loaded}%
    \renewcommand\transparent[1]{}%
  }%
  \providecommand\rotatebox[2]{#2}%
  \ifx\svgwidth\undefined%
    \setlength{\unitlength}{170.07874016bp}%
    \ifx\svgscale\undefined%
      \relax%
    \else%
      \setlength{\unitlength}{\unitlength * \real{\svgscale}}%
    \fi%
  \else%
    \setlength{\unitlength}{\svgwidth}%
  \fi%
  \global\let\svgwidth\undefined%
  \global\let\svgscale\undefined%
  \makeatother%
  \begin{picture}(0,0)%
      \put(-0.42,0){ 
\begin{tikzpicture}[x=1pt,y=1pt]
\definecolor{fillColor}{RGB}{255,255,255}
\path[use as bounding box,fill=fillColor,fill opacity=0.00] (0,0) rectangle (411.94,187.90);
\begin{scope}
\path[clip] ( 49.20, 61.20) rectangle (386.74,138.70);
\definecolor{drawColor}{RGB}{255,0,0}

\path[draw=drawColor,line width= 1.2pt,line join=round] (148.88, 66.46) -- (148.88, 85.60);
\definecolor{drawColor}{RGB}{0,0,0}

\path[draw=drawColor,line width= 0.4pt,dash pattern=on 4pt off 4pt ,line join=round,line cap=round] (128.41, 76.03) -- (145.29, 76.03);

\path[draw=drawColor,line width= 0.4pt,dash pattern=on 4pt off 4pt ,line join=round,line cap=round] (160.90, 76.03) -- (152.64, 76.03);

\path[draw=drawColor,line width= 0.4pt,line join=round,line cap=round] (128.41, 71.25) -- (128.41, 80.81);

\path[draw=drawColor,line width= 0.4pt,line join=round,line cap=round] (160.90, 71.25) -- (160.90, 80.81);
\definecolor{drawColor}{RGB}{0,0,255}

\path[draw=drawColor,line width= 0.4pt,line join=round,line cap=round] (145.29, 66.46) --
	(145.29, 85.60) --
	(152.64, 85.60) --
	(152.64, 66.46) --
	(145.29, 66.46);
\definecolor{drawColor}{RGB}{255,0,0}

\path[draw=drawColor,line width= 1.2pt,line join=round] (199.97, 90.38) -- (199.97,109.52);
\definecolor{drawColor}{RGB}{0,0,0}

\path[draw=drawColor,line width= 0.4pt,dash pattern=on 4pt off 4pt ,line join=round,line cap=round] ( 98.23, 99.95) -- (192.26, 99.95);

\path[draw=drawColor,line width= 0.4pt,dash pattern=on 4pt off 4pt ,line join=round,line cap=round] (227.12, 99.95) -- (211.68, 99.95);

\path[draw=drawColor,line width= 0.4pt,line join=round,line cap=round] ( 98.23, 95.17) -- ( 98.23,104.74);

\path[draw=drawColor,line width= 0.4pt,line join=round,line cap=round] (227.12, 95.17) -- (227.12,104.74);
\definecolor{drawColor}{RGB}{0,0,255}

\path[draw=drawColor,line width= 0.4pt,line join=round,line cap=round] (192.26, 90.38) --
	(192.26,109.52) --
	(211.68,109.52) --
	(211.68, 90.38) --
	(192.26, 90.38);
\definecolor{drawColor}{RGB}{255,0,0}

\path[draw=drawColor,line width= 1.2pt,line join=round] (363.97,114.30) -- (363.97,133.44);
\definecolor{drawColor}{RGB}{0,0,0}

\path[draw=drawColor,line width= 0.4pt,dash pattern=on 4pt off 4pt ,line join=round,line cap=round] (352.56,123.87) -- (361.26,123.87);

\path[draw=drawColor,line width= 0.4pt,dash pattern=on 4pt off 4pt ,line join=round,line cap=round] (373.96,123.87) -- (367.14,123.87);

\path[draw=drawColor,line width= 0.4pt,line join=round,line cap=round] (352.56,119.09) -- (352.56,128.66);

\path[draw=drawColor,line width= 0.4pt,line join=round,line cap=round] (373.96,119.09) -- (373.96,128.66);
\definecolor{drawColor}{RGB}{0,0,255}

\path[draw=drawColor,line width= 0.4pt,line join=round,line cap=round] (361.26,114.30) --
	(361.26,133.44) --
	(367.14,133.44) --
	(367.14,114.30) --
	(361.26,114.30);
\end{scope}
\begin{scope}
\path[clip] (  0.00,  0.00) rectangle (411.94,187.90);
\definecolor{drawColor}{RGB}{0,0,0}

\path[draw=drawColor,line width= 0.4pt,line join=round,line cap=round] ( 49.20, 76.03) -- ( 49.20,123.87);

\path[draw=drawColor,line width= 0.4pt,line join=round,line cap=round] ( 49.20, 76.03) -- ( 43.20, 76.03);

\path[draw=drawColor,line width= 0.4pt,line join=round,line cap=round] ( 49.20, 99.95) -- ( 43.20, 99.95);

\path[draw=drawColor,line width= 0.4pt,line join=round,line cap=round] ( 49.20,123.87) -- ( 43.20,123.87);

\node[text=drawColor,rotate= 90.00,anchor=base,inner sep=0pt, outer sep=0pt, scale=  1.00] at ( 34.80, 76.03) { \footnotesize{Ours}};

\node[text=drawColor,rotate= 90.00,anchor=base,inner sep=0pt, outer sep=0pt, scale=  1.00] at ( 34.80,123.87) { \footnotesize{Gao}};

\node[text=drawColor,rotate= 90.00,anchor=base,inner sep=0pt, outer sep=0pt, scale=  1.00] at ( 34.80,98.87) { \footnotesize{Usenko}};

\node[text=drawColor,anchor=base,inner sep=0pt, outer sep=0pt, scale=  1.00] at ( 230.80,131.00) { \small{\textbf{Map Generation Time}}};

\path[draw=drawColor,line width= 0.4pt,line join=round,line cap=round] ( 49.20, 61.20) --
	(386.74, 61.20) --
	(386.74,138.70) --
	( 49.20,138.70) --
	( 49.20, 61.20);
\end{scope}
\end{tikzpicture}
      }%
 
  \end{picture}%
\endgroup%
 }
\subfloat{
    \def\svgwidth{4.9cm}
\begingroup%
  \makeatletter%
  \providecommand\color[2][]{%
    \errmessage{(Inkscape) Color is used for the text in Inkscape, but the package 'color.sty' is not loaded}%
    \renewcommand\color[2][]{}%
  }%
  \providecommand\transparent[1]{%
    \errmessage{(Inkscape) Transparency is used (non-zero) for the text in Inkscape, but the package 'transparent.sty' is not loaded}%
    \renewcommand\transparent[1]{}%
  }%
  \providecommand\rotatebox[2]{#2}%
  \ifx\svgwidth\undefined%
    \setlength{\unitlength}{170.07874016bp}%
    \ifx\svgscale\undefined%
      \relax%
    \else%
      \setlength{\unitlength}{\unitlength * \real{\svgscale}}%
    \fi%
  \else%
    \setlength{\unitlength}{\svgwidth}%
  \fi%
  \global\let\svgwidth\undefined%
  \global\let\svgscale\undefined%
  \makeatother%
  \begin{picture}(1,1)%
      \put(-0.48,-0.558){ 
\begin{tikzpicture}[x=1pt,y=1pt]
\definecolor{fillColor}{RGB}{255,255,255}
\path[use as bounding box,fill=fillColor,fill opacity=0.00] (0,0) rectangle (411.94,187.90);
\begin{scope}
\path[clip] ( 49.20, 61.20) rectangle (386.74,138.70);
\definecolor{drawColor}{RGB}{255,0,0}

\path[draw=drawColor,line width= 1.2pt,line join=round] (182.64, 66.46) -- (182.64, 85.60);
\definecolor{drawColor}{RGB}{0,0,0}

\path[draw=drawColor,line width= 0.4pt,dash pattern=on 4pt off 4pt ,line join=round,line cap=round] (129.87, 76.03) -- (159.63, 76.03);

\path[draw=drawColor,line width= 0.4pt,dash pattern=on 4pt off 4pt ,line join=round,line cap=round] (226.99, 76.03) -- (205.77, 76.03);

\path[draw=drawColor,line width= 0.4pt,line join=round,line cap=round] (129.87, 71.25) -- (129.87, 80.81);

\path[draw=drawColor,line width= 0.4pt,line join=round,line cap=round] (226.99, 71.25) -- (226.99, 80.81);
\definecolor{drawColor}{RGB}{0,0,255}

\path[draw=drawColor,line width= 0.4pt,line join=round,line cap=round] (159.63, 66.46) --
	(159.63, 85.60) --
	(205.77, 85.60) --
	(205.77, 66.46) --
	(159.63, 66.46);
\definecolor{drawColor}{RGB}{255,0,0}

\path[draw=drawColor,line width= 1.2pt,line join=round] (281.51, 90.38) -- (281.51,109.52);
\definecolor{drawColor}{RGB}{0,0,0}

\path[draw=drawColor,line width= 0.4pt,dash pattern=on 4pt off 4pt ,line join=round,line cap=round] (142.69, 99.95) -- (266.18, 99.95);

\path[draw=drawColor,line width= 0.4pt,dash pattern=on 4pt off 4pt ,line join=round,line cap=round] (309.07, 99.95) -- (291.58, 99.95);

\path[draw=drawColor,line width= 0.4pt,line join=round,line cap=round] (142.69, 95.17) -- (142.69,104.74);

\path[draw=drawColor,line width= 0.4pt,line join=round,line cap=round] (309.07, 95.17) -- (309.07,104.74);
\definecolor{drawColor}{RGB}{0,0,255}

\path[draw=drawColor,line width= 0.4pt,line join=round,line cap=round] (266.18, 90.38) --
	(266.18,109.52) --
	(291.58,109.52) --
	(291.58, 90.38) --
	(266.18, 90.38);
\definecolor{drawColor}{RGB}{255,0,0}

\path[draw=drawColor,line width= 1.2pt,line join=round] (308.87,114.30) -- (308.87,133.44);
\definecolor{drawColor}{RGB}{0,0,0}

\path[draw=drawColor,line width= 0.4pt,dash pattern=on 4pt off 4pt ,line join=round,line cap=round] (249.22,123.87) -- (302.05,123.87);

\path[draw=drawColor,line width= 0.4pt,dash pattern=on 4pt off 4pt ,line join=round,line cap=round] (333.13,123.87) -- (318.85,123.87);

\path[draw=drawColor,line width= 0.4pt,line join=round,line cap=round] (249.22,119.09) -- (249.22,128.66);

\path[draw=drawColor,line width= 0.4pt,line join=round,line cap=round] (333.13,119.09) -- (333.13,128.66);
\definecolor{drawColor}{RGB}{0,0,255}

\path[draw=drawColor,line width= 0.4pt,line join=round,line cap=round] (302.05,114.30) --
	(302.05,133.44) --
	(318.85,133.44) --
	(318.85,114.30) --
	(302.05,114.30);
\end{scope}
\begin{scope}
\path[clip] (  0.00,  0.00) rectangle (411.94,187.90);
\definecolor{drawColor}{RGB}{0,0,0}

\path[draw=drawColor,line width= 0.4pt,line join=round,line cap=round] ( 49.20, 76.03) -- ( 49.20,123.87);

\path[draw=drawColor,line width= 0.4pt,line join=round,line cap=round] ( 49.20, 76.03) -- ( 43.20, 76.03);

\path[draw=drawColor,line width= 0.4pt,line join=round,line cap=round] ( 49.20, 99.95) -- ( 43.20, 99.95);

\path[draw=drawColor,line width= 0.4pt,line join=round,line cap=round] ( 49.20,123.87) -- ( 43.20,123.87);

\node[text=drawColor,rotate= 90.00,anchor=base,inner sep=0pt, outer sep=0pt, scale=  1.00] at ( 34.80, 76.03) { \footnotesize{Ours}};

\node[text=drawColor,rotate= 90.00,anchor=base,inner sep=0pt, outer sep=0pt, scale=  1.00] at ( 34.80,123.87) { \footnotesize{Gao}};

\node[text=drawColor,rotate= 90.00,anchor=base,inner sep=0pt, outer sep=0pt, scale=  1.00] at ( 34.80,98.87) { \footnotesize{Usenko}};

\node[text=drawColor,anchor=base,inner sep=0pt, outer sep=0pt, scale=  1.00] at ( 229.80,131.00) { \small{\textbf{Motion Generation Time}}};

\path[draw=drawColor,line width= 0.4pt,line join=round,line cap=round] ( 61.70, 61.20) -- (311.73, 61.20);

\path[draw=drawColor,line width= 0.4pt,line join=round,line cap=round] ( 61.70, 61.20) -- ( 61.70, 55.20);

\path[draw=drawColor,line width= 0.4pt,line join=round,line cap=round] (186.72, 61.20) -- (186.72, 55.20);

\path[draw=drawColor,line width= 0.4pt,line join=round,line cap=round] (311.73, 61.20) -- (311.73, 55.20);

\node[text=drawColor,anchor=base,inner sep=0pt, outer sep=0pt, scale=  1.00] at ( 61.70, 45.00) { \small{1e-02}};

\node[text=drawColor,anchor=base,inner sep=0pt, outer sep=0pt, scale=  1.00] at (186.72, 45.00) { \small{1e+00}};

\node[text=drawColor,anchor=base,inner sep=0pt, outer sep=0pt, scale=  1.00] at (311.73, 45.00) { \small{1e+02}};

\node[text=drawColor,anchor=base,inner sep=0pt, outer sep=0pt, scale=  1.00] at (234.72, 37.00) { miliseconds$[\mathrm{ms}]$};

\path[draw=drawColor,line width= 0.4pt,line join=round,line cap=round] ( 49.20, 61.20) --
	(386.74, 61.20) --
	(386.74,138.70) --
	( 49.20,138.70) --
	( 49.20, 61.20);
\end{scope}
\end{tikzpicture} }%
 
  \end{picture}%
\endgroup%
}
\subfloat{\label{subfig:comp3}
    \def\svgwidth{4.9cm}
\begingroup%
  \makeatletter%
  \providecommand\color[2][]{%
    \errmessage{(Inkscape) Color is used for the text in Inkscape, but the package 'color.sty' is not loaded}%
    \renewcommand\color[2][]{}%
  }%
  \providecommand\transparent[1]{%
    \errmessage{(Inkscape) Transparency is used (non-zero) for the text in Inkscape, but the package 'transparent.sty' is not loaded}%
    \renewcommand\transparent[1]{}%
  }%
  \providecommand\rotatebox[2]{#2}%
  \ifx\svgwidth\undefined%
    \setlength{\unitlength}{170.07874016bp}%
    \ifx\svgscale\undefined%
      \relax%
    \else%
      \setlength{\unitlength}{\unitlength * \real{\svgscale}}%
    \fi%
  \else%
    \setlength{\unitlength}{\svgwidth}%
  \fi%
  \global\let\svgwidth\undefined%
  \global\let\svgscale\undefined%
  \makeatother%
  \begin{picture}(0,0)%
      \put(-2.45, -0.06){ \includegraphics[width=\unitlength]{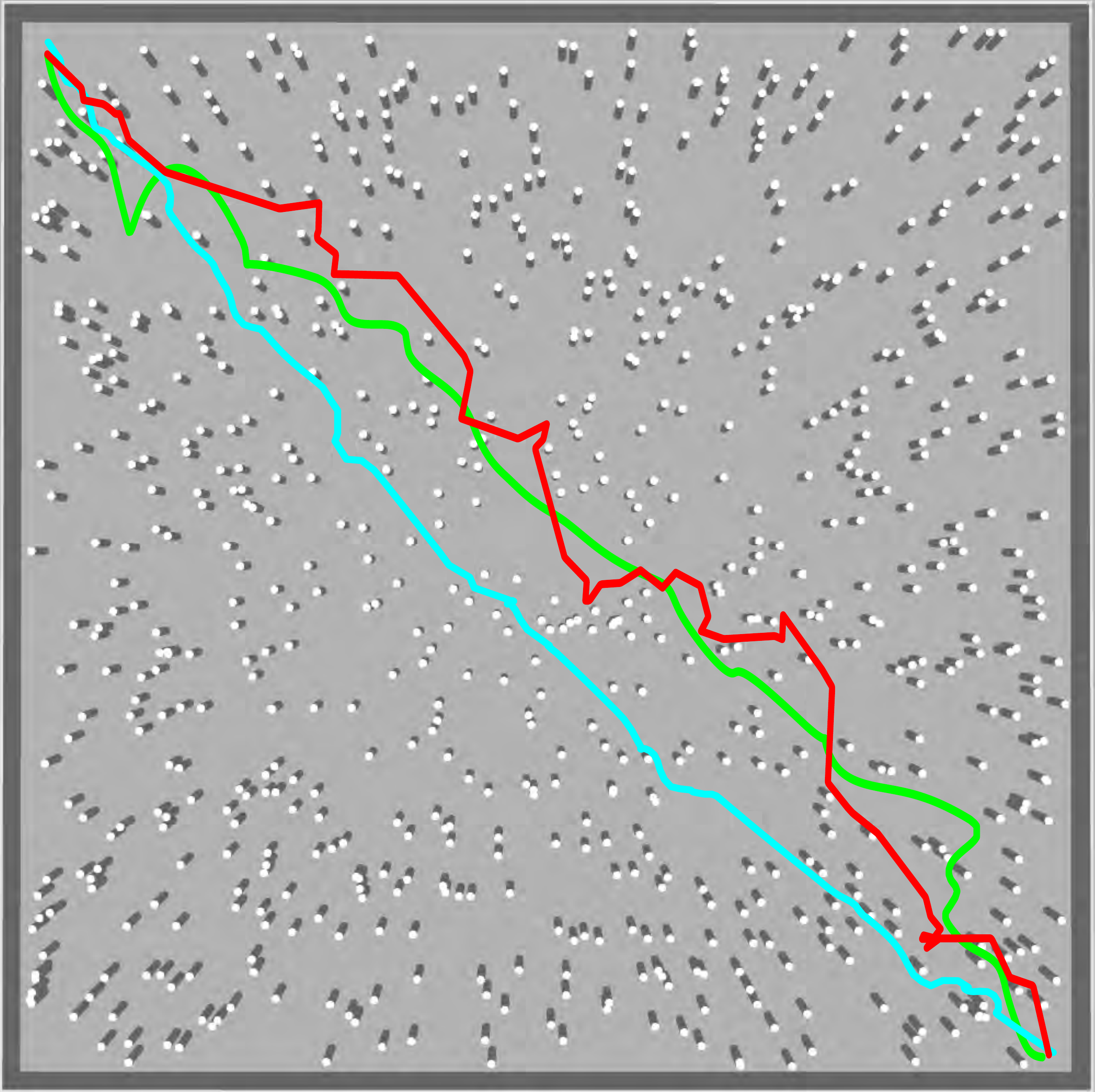} }%
 
  \end{picture}%
  
\endgroup%
}\vspace{0.7in}

    \caption{Qualtative comparision against Gao et. al.~\cite{Gao2018OnlinePolynomial} and Usenko et. al.~\cite{Usenko2017Real-TimeBuffer}. \textbf{On the left.} Top view of the simulated forest environment, together with the traversed paths. Cyan-Usenko et. al.~\cite{Usenko2017Real-TimeBuffer}; Green-Gao et. al.~\cite{Gao2018OnlinePolynomial} and Red-Ours.  \textbf{On the right.} Boxplot, on a logarithmic scale, of the computed time of each algorithm for map generation and motion generation.}
    \label{fig:comparisons}
\end{figure*}

\subsection{Benchmark}
 The benchmark experiments were performed on an Intel Core i7-5557U @ 3.1 GHz, 8 GB of RAM, Ubuntu 16.04-LTS operating system, and using ROS Kinetic. The algorithm was compared against two state-of-the-art navigation algorithms~\cite{Gao2018OnlinePolynomial}~\cite{Usenko2017Real-TimeBuffer}. The simulated experiments were performed in several Poisson forest scenarios with a tree density of $0.3$ $trees/m^2$ in a space of $50\text{m}\times50\text{m}\times2\text{m}$. The trees were limited to a height of two meters and a radius of twenty centimeters. The start and goal positions of the simulated robot were chosen in opposite corners to maximize the distance traveled. The trees per cubic meter metric represent $ 26.6 \%$ of the volume. The environments were simulated using Gazebo together with Rotor-S MAV~\cite{Furrer2016RotorSAFramework} and RealSense simulation packages. The simulated robot was selected as the Hummingbird and its sensor a RealSense depth camera with 70 degrees of horizontal FoV and 43 degrees of vertical FoV. Two sets of metrics were analyzed in the benchmark. The first set relates to the computation time of each algorithm: map generation time and motion generation time. The second set describes the overall performance of the generated trajectories in each algorithm: total trajectory length, average velocity and time to reach the target goal.

Some parameters had to be tuned to obtain a high success rate in our simulated forest environments, starting from the configuration reported in each paper.  The method as reported in~\cite{Gao2018OnlinePolynomial} assumes a 360-degrees horizontal FoV and ten meters sensing range. For this reason, in the experiments for~\cite{Gao2018OnlinePolynomial}, the observable space was set to cover a front horizontal FoV of 180-degrees by using a multirotor MAV with three RealSense cameras with a depth limit of $7 \text{m}$. The maximum velocity of the robot was set to $0.5 \text{m}/\text{s}$, the maximum acceleration to $0.5 \text{m}/\text{s}^2$ and the check horizon and stop horizon parameters were set to $8.5 \text{m}$ and $1.0 \text{m}$, respectively.

The implementation of the algorithm in~\cite{Usenko2017Real-TimeBuffer} presented some limitations as we were not able to achieve a perfect success rate even after careful tuning. The main reason is that dynamic constraints in yaw and acceleration are not taken into account in the trajectory generation leading to two types of failures. On one hand collisions occur, either because the robot is moving too fast and is unable to stop before a detected obstacle, or because it changes direction too quickly and is unable to align the FoV with the velocity before running into unobserved obstacles. On the other hand, in some cases the robot became unstable when the controller tried to follow trajectories requiring large accelerations. The best success rate in our experiments was obtained by setting a maximum velocity of $0.7\text{m}/\text{s}$ and a maximum acceleration of $0.7\text{m}/\text{s}^2$ for the robot. The distance threshold, number of optimization points and the dt parameters of the algorithm were set to $0.5\text{m}$, $5$ and $0.5\text{s}$, respectively.

In our algorithm, we configured the maximum depth with three meters, the minimum voxel radius of 0.1 meters and the maximum virtual map size of 150 meters.  The parameters used for the trajectory generation were $E_p=0.1\text{m}$, $E_\yaw=1\text{rad}$, for the maximum separation in the position and the orientation, respectively. For the maximum velocity $V_{\max}=1\text{m}/\text{s}$, the maximum acceleration $A_{\max}=1.0\text{m}/\text{s}^2$, the maximum jerk $J_{\max}=1.0\text{m}/\text{s}^3$, and the maximum angular velocity $\Omega_{\max}=0.2\text{rad}/s$. Moreover, we use the following gains: $k_1=~55.0$, $k_2=-843.75$, $k_3=-5406.2$, $k_4=-10687.5$, $k_5=-10.5$, $k_6=-33.3$.

The results of this benchmark are presented in Figure~\ref{fig:comparisons} and Table~\ref{Tab:results}. Note that our algorithm outperforms~\cite{Gao2018OnlinePolynomial} and~\cite{Usenko2017Real-TimeBuffer} in the first set of metrics shown in Figure~\ref{fig:comparisons}, with an average motion time of $3.37 \text{ms}$ against $103.2 \text{ms}$ obtained by~\cite{Gao2018OnlinePolynomial}  and $35.5 \text{ms}$ obtained by~\cite{Usenko2017Real-TimeBuffer}; and an average mapping time of $0.256\text{ms}$ against $700.7 \text{ms}$ obtained by~\cite{Gao2018OnlinePolynomial}  and $2.035 \text{ms}$ obtained by~\cite{Usenko2017Real-TimeBuffer}. Regarding the benchmark of the performance of the trajectories, the results are presented in Table~\ref{Tab:results} where it can be noted that our method outperforms ~\cite{Gao2018OnlinePolynomial} and~\cite{Usenko2017Real-TimeBuffer} in average velocity of the robot and average navigation time. However, our approach generated, on average, slightly longer paths.

\begin{table}
\centering

\begin{tabular}{c|ccc}
\textbf{Algorithm} & \textbf{\begin{tabular}[c]{@{}c@{}}Avg. Path\\ Length {[}m{]}\end{tabular}} & \textbf{\begin{tabular}[c]{@{}c@{}}Avg. \\ Velocity {[}m/s{]}\end{tabular}} & \textbf{\begin{tabular}[c]{@{}c@{}}Avg. Nav.\\ Time {[}s{]}\end{tabular}} \\ \hline
Gao et. al. \cite{Gao2018OnlinePolynomial}  &   82.56   &  0.2182    &  406.41 \\
Usenko et. al. \footnote{Results for this method are shown although not all tests were completed i.e., the robot crashed with the obstacles since several generated trajectories were dynamically unfeasible} \cite{Usenko2017Real-TimeBuffer} & \textbf{77.572} & 0.48937 & 170.02\\
Ours  &   92.752  &    \textbf{0.79753}    &      \textbf{146.30 }                             
\end{tabular}
\caption{The overall performance of the generated trajectories}
\label{Tab:results}
\end{table}

\begin{figure*}
    \centering
    \subfloat[]{\label{subfig:exp1}

\def\svgwidth{4.20cm}
\begingroup%
  \makeatletter%
  \providecommand\color[2][]{%
    \errmessage{(Inkscape) Color is used for the text in Inkscape, but the package 'color.sty' is not loaded}%
    \renewcommand\color[2][]{}%
  }%
  \providecommand\transparent[1]{%
    \errmessage{(Inkscape) Transparency is used (non-zero) for the text in Inkscape, but the package 'transparent.sty' is not loaded}%
    \renewcommand\transparent[1]{}%
  }%
  \providecommand\rotatebox[2]{#2}%
  \ifx\svgwidth\undefined%
    \setlength{\unitlength}{170.07874016bp}%
    \ifx\svgscale\undefined%
      \relax%
    \else%
      \setlength{\unitlength}{\unitlength * \real{\svgscale}}%
    \fi%
  \else%
    \setlength{\unitlength}{\svgwidth}%
  \fi%
  \global\let\svgwidth\undefined%
  \global\let\svgscale\undefined%
  \makeatother%
  \begin{picture}(1,1)%
      \put(0,0){\includegraphics[width=\unitlength]{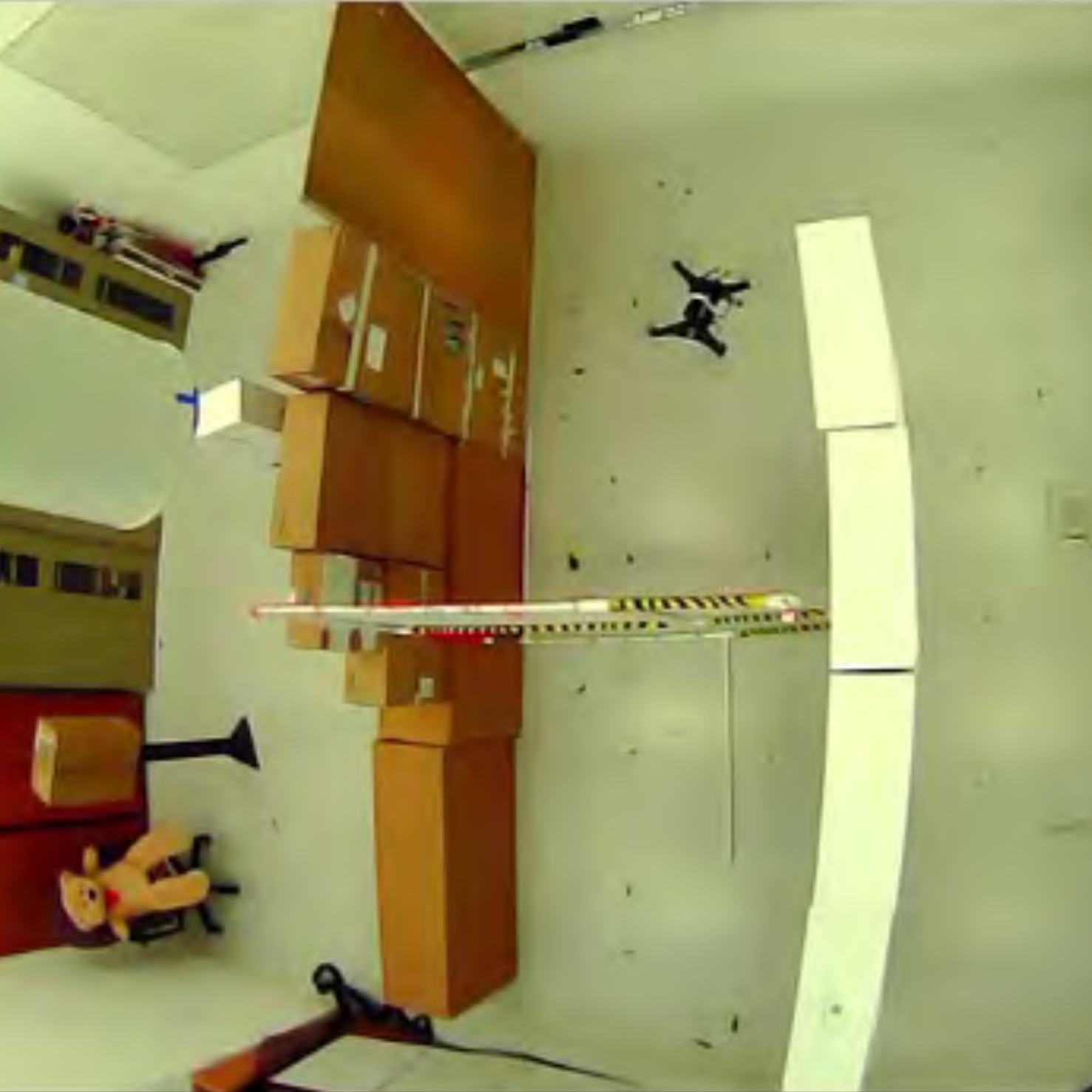}}%
    \end{picture}%
\endgroup%
    }
    \hspace{-0.38cm}
    \subfloat[]{\label{subfig:exp2}
    \def\svgwidth{4.20cm}
\begingroup%
  \makeatletter%
  \providecommand\color[2][]{%
    \errmessage{(Inkscape) Color is used for the text in Inkscape, but the package 'color.sty' is not loaded}%
    \renewcommand\color[2][]{}%
  }%
  \providecommand\transparent[1]{%
    \errmessage{(Inkscape) Transparency is used (non-zero) for the text in Inkscape, but the package 'transparent.sty' is not loaded}%
    \renewcommand\transparent[1]{}%
  }%
  \providecommand\rotatebox[2]{#2}%
  \ifx\svgwidth\undefined%
    \setlength{\unitlength}{170.07874016bp}%
    \ifx\svgscale\undefined%
      \relax%
    \else%
      \setlength{\unitlength}{\unitlength * \real{\svgscale}}%
    \fi%
  \else%
    \setlength{\unitlength}{\svgwidth}%
  \fi%
  \global\let\svgwidth\undefined%
  \global\let\svgscale\undefined%
  \makeatother%
  \begin{picture}(1,1)%
      \put(0,0){\includegraphics[width=\unitlength,page=1]{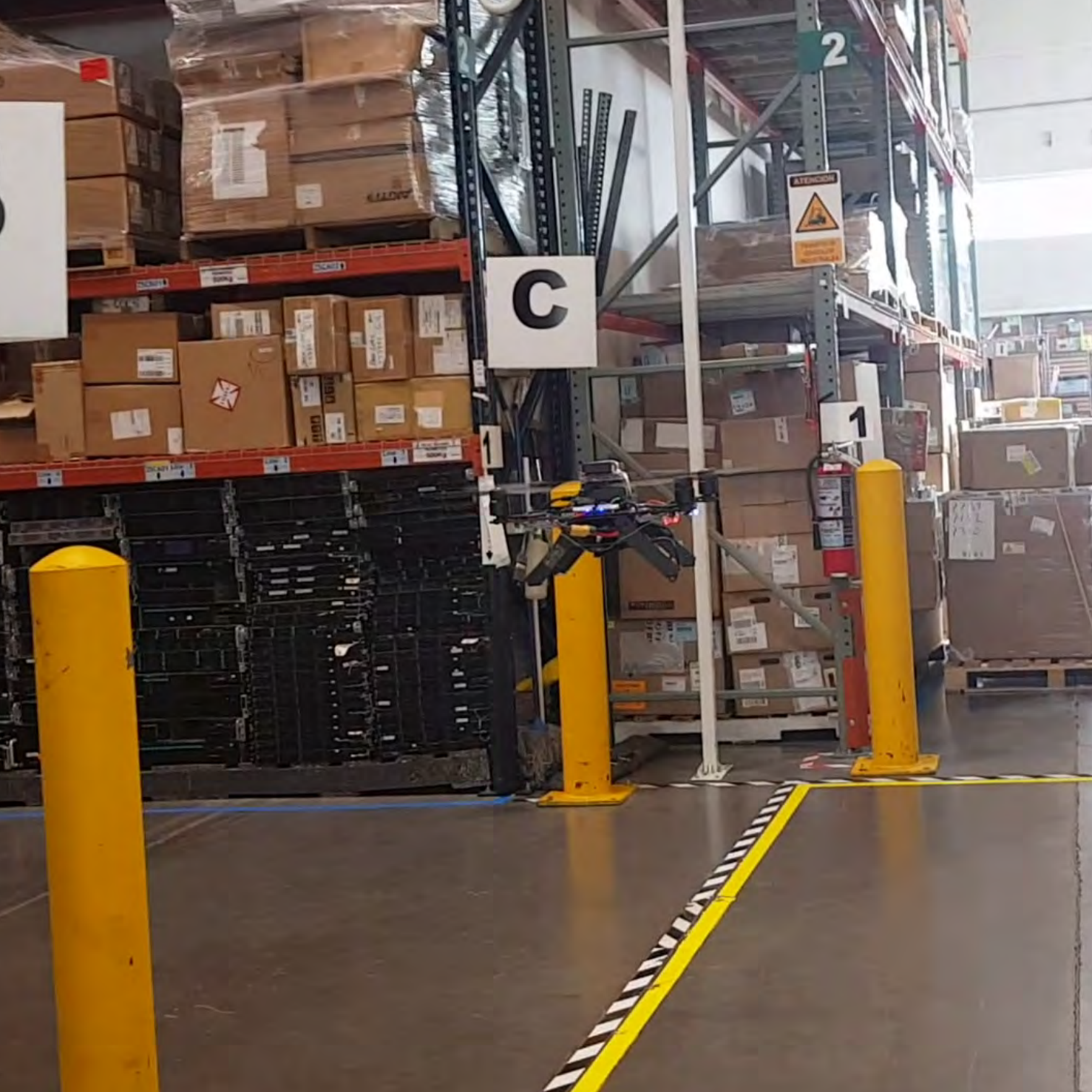}}%

  \end{picture}%
\endgroup%
    }\hspace{-0.38cm}
    \subfloat[]{\label{subfig:exp3}
    \def\svgwidth{4.20cm}
\begingroup%
  \makeatletter%
  \providecommand\color[2][]{%
    \errmessage{(Inkscape) Color is used for the text in Inkscape, but the package 'color.sty' is not loaded}%
    \renewcommand\color[2][]{}%
  }%
  \providecommand\transparent[1]{%
    \errmessage{(Inkscape) Transparency is used (non-zero) for the text in Inkscape, but the package 'transparent.sty' is not loaded}%
    \renewcommand\transparent[1]{}%
  }%
  \providecommand\rotatebox[2]{#2}%
  \ifx\svgwidth\undefined%
    \setlength{\unitlength}{170.07874016bp}%
    \ifx\svgscale\undefined%
      \relax%
    \else%
      \setlength{\unitlength}{\unitlength * \real{\svgscale}}%
    \fi%
  \else%
    \setlength{\unitlength}{\svgwidth}%
  \fi%
  \global\let\svgwidth\undefined%
  \global\let\svgscale\undefined%
  \makeatother%
  \begin{picture}(1,1)%
      \put(0,0){\includegraphics[width=\unitlength,page=1]{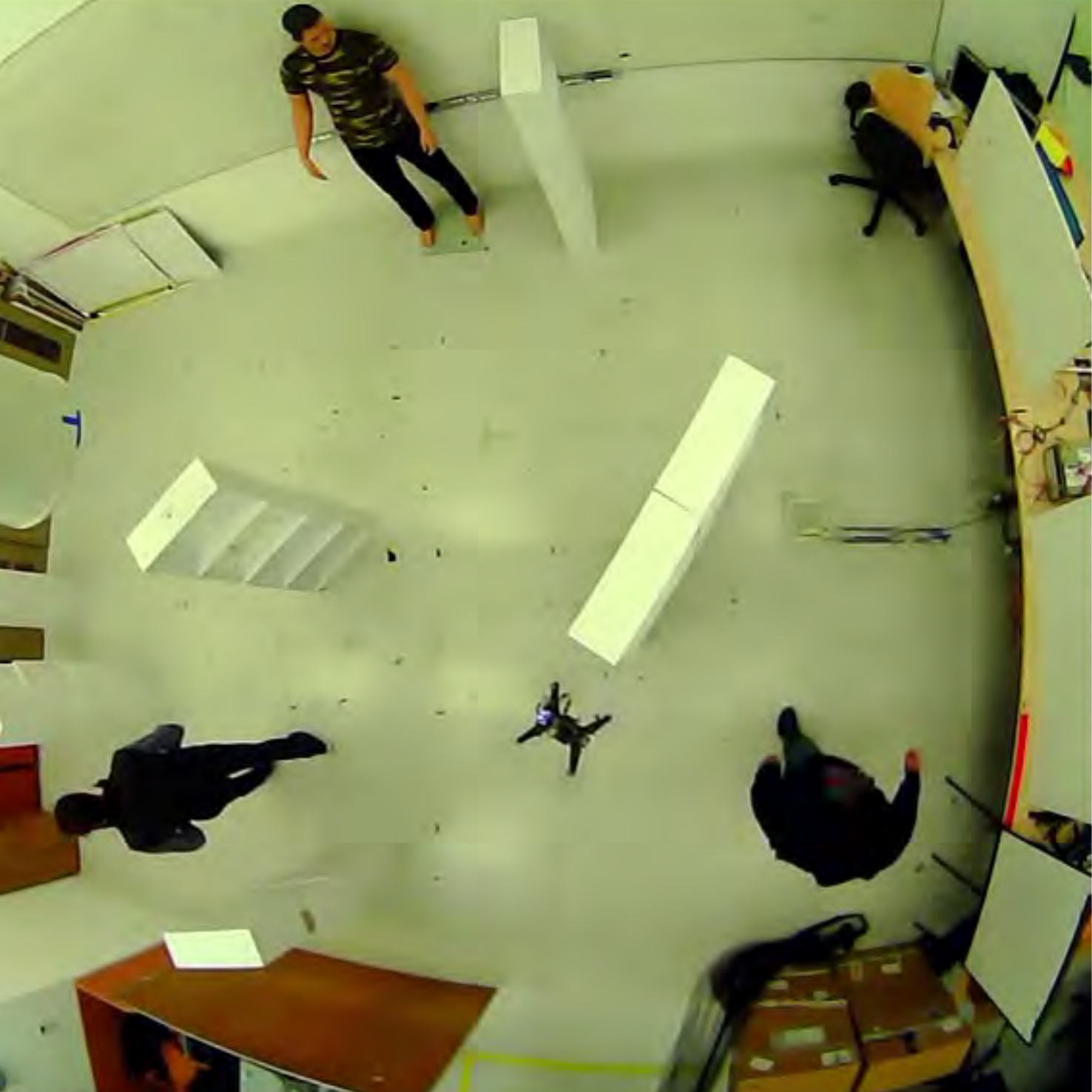}}
    
  \end{picture}%
\endgroup%
    }\hspace{-0.38cm}
    \subfloat[]{\label{subfig:exp4}
    \def\svgwidth{4.20cm}
\begingroup%
  \makeatletter%
  \providecommand\color[2][]{%
    \errmessage{(Inkscape) Color is used for the text in Inkscape, but the package 'color.sty' is not loaded}%
    \renewcommand\color[2][]{}%
  }%
  \providecommand\transparent[1]{%
    \errmessage{(Inkscape) Transparency is used (non-zero) for the text in Inkscape, but the package 'transparent.sty' is not loaded}%
    \renewcommand\transparent[1]{}%
  }%
  \providecommand\rotatebox[2]{#2}%
  \ifx\svgwidth\undefined%
    \setlength{\unitlength}{170.07874016bp}%
    \ifx\svgscale\undefined%
      \relax%
    \else%
      \setlength{\unitlength}{\unitlength * \real{\svgscale}}%
    \fi%
  \else%
    \setlength{\unitlength}{\svgwidth}%
  \fi%
  \global\let\svgwidth\undefined%
  \global\let\svgscale\undefined%
  \makeatother%
  \begin{picture}(1,1)%
      \put(0,0){\includegraphics[width=\unitlength]{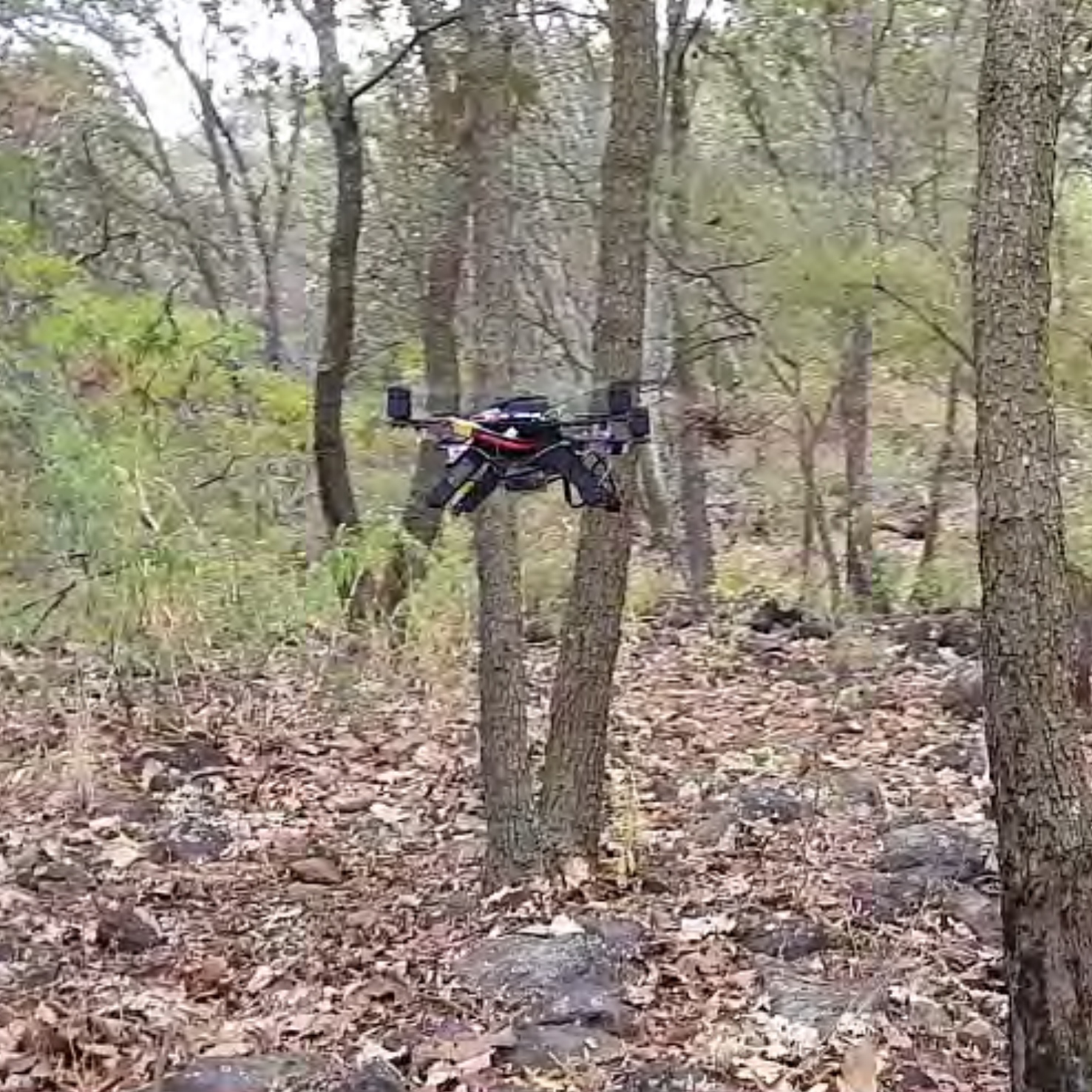}}%
   
  \end{picture}%
\endgroup%
    
    }
    
    \caption{Environments for real-world experiments: (a) 3D maze scenario. (b) Warehouse scenario. (c) Dynamic environment. (d) Forest environment.}
    \label{fig:experiments}
\end{figure*}

A second scenario in which the algorithms were tested is in the maze environment shown in Figure~\ref{fig:maze_results}. This scenario is challenging as a successful completion requires the ability to escape ``pockets" and to maneuver in very tight spaces. Hence, navigating inside the field of view becomes of paramount importance. A typical path execution using our algorithm is illustrated in Figure~\ref{fig:maze_results}. Notice the loops in the trajectory where the MAV encountered pockets. Rather than stopping completely and then turning around while hovering in place, the MAV was able to turn around while continuing to make forward progress safely by navigating inside the field of view all the time. We did our best to complete this scenario with the algorithms ~\cite{Usenko2017Real-TimeBuffer,Gao2018OnlinePolynomial} but unfortunately none of our attempts were successful. In the case of~\cite{Usenko2017Real-TimeBuffer} we believe failure to account for yaw dynamical constraints of the system in the planning algorithm was the main cause for the navigation always ending in collisions with the maze walls. In the case of~\cite{Gao2018OnlinePolynomial}, we extended the setup to six cameras covering the full 360-degrees of horizontal field of view and we carefully tuned the algorithm parameters including the polynomial order in the optimizer. Nonetheless, the maze environment could not be completed as in every experiment there was a point where the optimizer was unable to find feasible solutions even after many re-plannings. This environment demonstrates the advantage of the trajectory generation algorithm here proposed. As the results indicate, compared to polynomial trajectories, our method has the advantage of navigating through tight corners, such as doorways in an indoor environment.

\begin{figure}
    \centering
    \includegraphics[width=3.4cm]{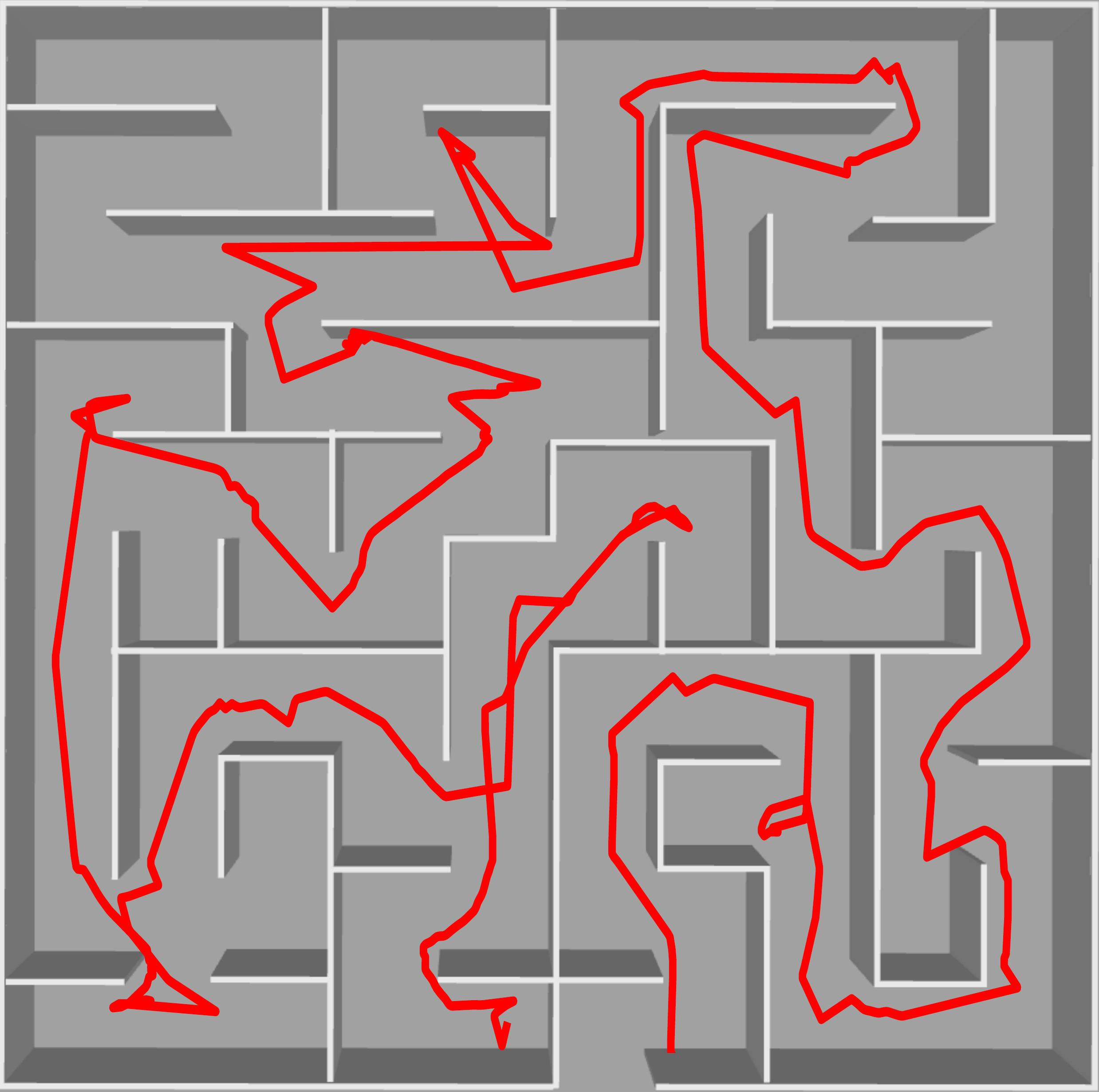}
    \includegraphics[width=5cm]{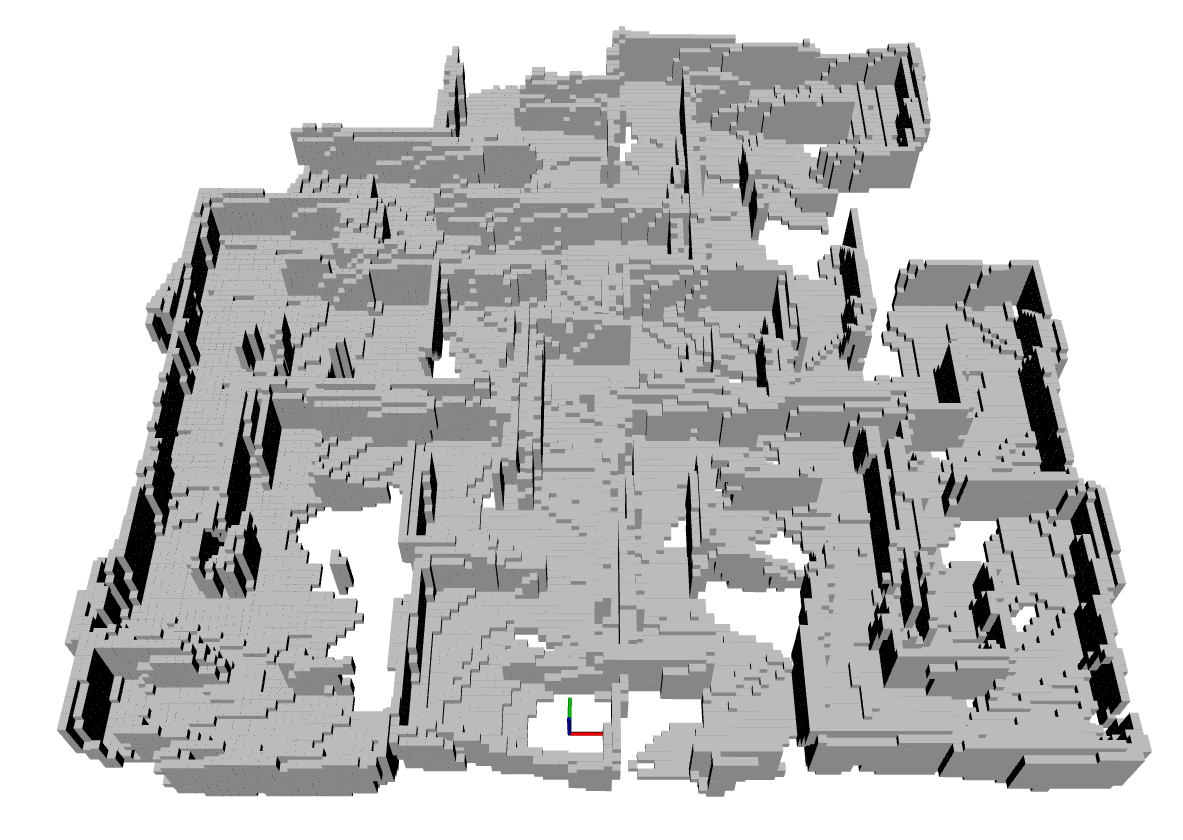}
    \caption{Navigation in an unknown maze environment. \textbf{On the left.} In red, the generated trajectory of our algorithm to reach the goal. The start and goal positions are given in blue and red, respectively. \textbf{On the right.} The map created while finding the goal.}
    \label{fig:maze_results}
\end{figure}

\subsection{Real-World Experiments}

To illustrate the effectiveness of our approach for navigating in real unknown scenarios, experiments were performed in the different static and dynamic environments illustrated in Figure~\ref{fig:experiments}. The framework was implemented on an Intel Aero Ready-To-Fly drone\footnote{\url{https://www.intel.com/content/www/us/en/products/drones/aero-ready-to-fly.html}} with an Intel Atom x7-Z8750 @ 1.6 GHz, 4 Gb of RAM, Ubuntu 16.04-LTS operating system with ROS Kinetic, an Intel~RealSense~Tracking~Camera~T265~\footnote{\url{https://www.intelrealsense.com/tracking-camera-t265/}} for visual-inertial odometry and an Intel~RealSense~Depth~Camera~D435\footnote{\url{https://www.intelrealsense.com/depth-camera-d435/}} for depth images sensing. 


The entire framework runs completely on-board at 33Hz alongside with a custom nonlinear control algorithm for precise trajectory tracking. To the best of our knowledge, these experiments represent the implementation in the most frugal platform for navigation in unknown cluttered environments demonstrated to date.  A video of the different tests can be found in ~\url{https://youtu.be/Wq0e7vF6nZM}.

We configured the maximum depth with four meters, the minimum voxel radius of 0.06 meters and the maximum virtual map size of 20 meters. The parameters used for the trajectory generation were $E_p=0.1\text{m}$, $E_\yaw=1\text{rad}$, for the maximum separation in the position and the orientation, respectively. For the maximum velocity $V_{\max}=1\text{m}/\text{s}$, the maximum acceleration $A_{\max}=0.1\text{m}/\text{s}^2$, the maximum jerk $J_{\max}=0.1\text{m}/\text{s}^3$, and the maximum angular velocity $\Omega_{\max}=0.2\text{rad}/s$. Moreover, we use the following gains: $k_1=~35.0$, $k_2=443.75$, $k_3=-2406.2$, $k_4=-4687.5$, $k_5=-10.5$, $k_6=-33.3$. We obtained the average computation time of each algorithm while running in the Intel Atom microprocessor: map generation time with $3.07\text{ms}$  and motion generation time with $5.1\text{ms}$ seconds.

The first scenario, shown in Figure~\ref{subfig:exp1}, is a 3D maze, where the quadcopter had to navigate from it's initial position in one corner of the maze area to the opposite corner. This scenario is interesting since the robot had to navigate through narrow passages, around tight blind corners and through windows at different heights. The second scenario, shown in Figure~\ref{subfig:exp2}, is an industrial warehouse. This scenario was chosen because it presents a potential real-world application for autonomous MAVs, e.g., for capturing inventory and locating out of place items. The third scenario, shown in Figure~\ref{subfig:exp3}, is a dynamic cluttered environment with people walking inside a lab area. This experiment exhibits the importance of keeping an up-to-date occupied space since, without this feature, the dynamic obstacles would create ``virtual" walls blocking the paths to the goal. It is important to highlight that, from the methods described in Table~\ref{tab:comparison} only our framework has been demonstrated in a dynamic environment. Finally, the fourth scenario, shown in and Figure~\ref{fig:mainfig} and Figure~\ref{subfig:exp4}, is an outdoors exploration in a forest environment, which was chosen because represents a cluttered unstructured and natural scenario with irregular obstacles. 

\section{Conclusion}
We have proposed a complete framework for autonomous navigation. In more detail, the framework allows robots to plan and execute trajectories in cluttered environments, simultaneously generating a map that can be used by the robot to escape pockets and reach target locations in complex 3D environments. Our analytical and numerical results show that the trajectories generated by the framework are safe because they take into account both field of view restrictions as well as dynamical constraints of the robot. The framework was implemented in a commercial multirotor MAV kit and was demonstrated to work real-time, at the camera frame rate of 30Hz, without a discrete GPU or a high-end CPU. Future extensions of this work include the integration of trajectory tracking and prediction of dynamic obstacles, which would allow the robot to navigate more effectively in crowded environments. 


\begin{thebibliography}{10}
\providecommand{\url}[1]{#1}
\csname url@rmstyle\endcsname
\providecommand{\newblock}{\relax}
\providecommand{\bibinfo}[2]{#2}
\providecommand\BIBentrySTDinterwordspacing{\spaceskip=0pt\relax}
\providecommand\BIBentryALTinterwordstretchfactor{4}
\providecommand\BIBentryALTinterwordspacing{\spaceskip=\fontdimen2\font plus
\BIBentryALTinterwordstretchfactor\fontdimen3\font minus
  \fontdimen4\font\relax}
\providecommand\BIBforeignlanguage[2]{{%
\expandafter\ifx\csname l@#1\endcsname\relax
\typeout{** WARNING: IEEEtran.bst: No hyphenation pattern has been}%
\typeout{** loaded for the language `#1'. Using the pattern for}%
\typeout{** the default language instead.}%
\else
\language=\csname l@#1\endcsname
\fi
#2}}

\bibitem{Cadena2016PastAge}
\BIBentryALTinterwordspacing
C.~Cadena, L.~Carlone, H.~Carrillo, Y.~Latif, D.~Scaramuzza, J.~Neira, I.~Reid,
  and J.~J. Leonard, ``{Past, Present, and Future of Simultaneous Localization
  and Mapping: Toward the Robust-Perception Age},'' \emph{IEEE Transactions on
  Robotics}, vol.~32, no.~6, pp. 1309--1332, 12 2016. [Online]. Available:
  \url{http://ieeexplore.ieee.org/document/7747236/}
\BIBentrySTDinterwordspacing

\bibitem{Tsardoulias2016ADensity}
\BIBentryALTinterwordspacing
E.~G. Tsardoulias, A.~Iliakopoulou, A.~Kargakos, and L.~Petrou, ``{A Review of
  Global Path Planning Methods for Occupancy Grid Maps Regardless of Obstacle
  Density},'' \emph{Journal of Intelligent {\&} Robotic Systems}, vol.~84, no.
  1-4, pp. 829--858, 12 2016. [Online]. Available:
  \url{http://link.springer.com/10.1007/s10846-016-0362-z}
\BIBentrySTDinterwordspacing

\bibitem{Thrun2006ProbabilisticRobotics}
\BIBentryALTinterwordspacing
S.~Thrun, W.~Burgard, and D.~Fox, \emph{{Probabilistic robotics}}.\hskip 1em
  plus 0.5em minus 0.4em\relax MIT Press, 2006. [Online]. Available:
  \url{https://mitpress.mit.edu/books/probabilistic-robotics}
\BIBentrySTDinterwordspacing

\bibitem{Shen2011AutonomousMAV}
\BIBentryALTinterwordspacing
S.~Shen, N.~Michael, and V.~Kumar, ``{Autonomous multi-floor indoor navigation
  with a computationally constrained MAV},'' in \emph{2011 IEEE International
  Conference on Robotics and Automation}.\hskip 1em plus 0.5em minus
  0.4em\relax IEEE, 5 2011, pp. 20--25. [Online]. Available:
  \url{http://ieeexplore.ieee.org/document/5980357/}
\BIBentrySTDinterwordspacing

\bibitem{Fraundorfer2012Vision-basedMAV}
\BIBentryALTinterwordspacing
F.~Fraundorfer, L.~Heng, D.~Honegger, G.~H. Lee, L.~Meier, P.~Tanskanen, and
  M.~Pollefeys, ``{Vision-based autonomous mapping and exploration using a
  quadrotor MAV},'' in \emph{2012 IEEE/RSJ International Conference on
  Intelligent Robots and Systems}.\hskip 1em plus 0.5em minus 0.4em\relax IEEE,
  10 2012, pp. 4557--4564. [Online]. Available:
  \url{http://ieeexplore.ieee.org/document/6385934/}
\BIBentrySTDinterwordspacing

\bibitem{Yamauchi1997AExploration}
\BIBentryALTinterwordspacing
B.~Yamauchi, ``{A frontier-based approach for autonomous exploration},'' in
  \emph{IEEE International Symposium on Computational Intelligence in Robotics
  and Automation CIRA'97. 'Towards New Computational Principles for Robotics
  and Automation'}.\hskip 1em plus 0.5em minus 0.4em\relax IEEE Comput. Soc.
  Press, 1997, pp. 146--151. [Online]. Available:
  \url{http://ieeexplore.ieee.org/document/613851/}
\BIBentrySTDinterwordspacing

\bibitem{Connolly1985TheViews}
\BIBentryALTinterwordspacing
C.~Connolly, ``{The determination of next best views},'' in \emph{Proceedings.
  1985 IEEE International Conference on Robotics and Automation}, vol.~2.\hskip
  1em plus 0.5em minus 0.4em\relax Institute of Electrical and Electronics
  Engineers, 1985, pp. 432--435. [Online]. Available:
  \url{http://ieeexplore.ieee.org/document/1087372/}
\BIBentrySTDinterwordspacing

\bibitem{Cieslewski2017RapidFlight}
\BIBentryALTinterwordspacing
T.~Cieslewski, E.~Kaufmann, and D.~Scaramuzza, ``{Rapid exploration with
  multi-rotors: A frontier selection method for high speed flight},'' in
  \emph{2017 IEEE/RSJ International Conference on Intelligent Robots and
  Systems (IROS)}.\hskip 1em plus 0.5em minus 0.4em\relax IEEE, 9 2017, pp.
  2135--2142. [Online]. Available:
  \url{http://ieeexplore.ieee.org/document/8206030/}
\BIBentrySTDinterwordspacing

\bibitem{Papachristos2017Uncertainty-awareRobots}
C.~Papachristos, S.~Khattak, and K.~Alexis, ``{Uncertainty-aware receding
  horizon exploration and mapping using aerial robots},'' in \emph{Proceedings
  - IEEE International Conference on Robotics and Automation}, 2017.

\bibitem{Selin2019EfficientEnvironments}
\BIBentryALTinterwordspacing
M.~Selin, M.~Tiger, D.~Duberg, F.~Heintz, and P.~Jensfelt, ``{Efficient
  Autonomous Exploration Planning of Large-Scale 3-D Environments},''
  \emph{IEEE Robotics and Automation Letters}, vol.~4, no.~2, pp. 1699--1706, 4
  2019. [Online]. Available:
  \url{https://ieeexplore.ieee.org/document/8633925/}
\BIBentrySTDinterwordspacing

\bibitem{Oleynikova2018AEnvironments}
\BIBentryALTinterwordspacing
H.~Oleynikova, Z.~Taylor, A.~Millane, R.~Siegwart, and J.~Nieto, ``{A Complete
  System for Vision-Based Micro-Aerial Vehicle Mapping, Planning, and Flight in
  Cluttered Environments},'' \emph{arxiv}, 12 2018. [Online]. Available:
  \url{https://arxiv.org/abs/1812.03892}
\BIBentrySTDinterwordspacing

\bibitem{Lin2018AutonomousFusion}
\BIBentryALTinterwordspacing
Y.~Lin, F.~Gao, T.~Qin, W.~Gao, T.~Liu, W.~Wu, Z.~Yang, and S.~Shen,
  ``{Autonomous aerial navigation using monocular visual-inertial fusion},''
  \emph{Journal of Field Robotics}, vol.~35, no.~1, pp. 23--51, 1 2018.
  [Online]. Available: \url{http://doi.wiley.com/10.1002/rob.21732}
\BIBentrySTDinterwordspacing

\bibitem{Mohta2018FastEnvironments}
\BIBentryALTinterwordspacing
K.~Mohta, M.~Watterson, Y.~Mulgaonkar, S.~Liu, C.~Qu, A.~Makineni, K.~Saulnier,
  K.~Sun, A.~Zhu, J.~Delmerico, K.~Karydis, N.~Atanasov, G.~Loianno,
  D.~Scaramuzza, K.~Daniilidis, C.~J. Taylor, and V.~Kumar, ``{Fast, autonomous
  flight in GPS-denied and cluttered environments},'' \emph{Journal of Field
  Robotics}, vol.~35, no.~1, pp. 101--120, 1 2018. [Online]. Available:
  \url{http://doi.wiley.com/10.1002/rob.21774}
\BIBentrySTDinterwordspacing

\bibitem{Usenko2017Real-TimeBuffer}
\BIBentryALTinterwordspacing
V.~Usenko, L.~von Stumberg, A.~Pangercic, and D.~Cremers, ``{Real-Time
  Trajectory Replanning for MAVs using Uniform B-splines and a 3D Circular
  Buffer},'' in \emph{2017 IEEE/RSJ International Conference on Intelligent
  Robots and Systems (IROS)}, 3 2017. [Online]. Available:
  \url{http://arxiv.org/abs/1703.01416 http://arxiv.org/abs/1703.01416}
\BIBentrySTDinterwordspacing

\bibitem{Gao2018OnlinePolynomial}
\BIBentryALTinterwordspacing
F.~Gao, W.~Wu, Y.~Lin, and S.~Shen, ``{Online Safe Trajectory Generation for
  Quadrotors Using Fast Marching Method and Bernstein Basis Polynomial},'' in
  \emph{2018 IEEE International Conference on Robotics and Automation
  (ICRA)}.\hskip 1em plus 0.5em minus 0.4em\relax IEEE, 5 2018, pp. 344--351.
  [Online]. Available: \url{https://ieeexplore.ieee.org/document/8462878/}
\BIBentrySTDinterwordspacing

\bibitem{Florence2018NanoMap:Data}
\BIBentryALTinterwordspacing
P.~R. Florence, J.~Carter, J.~Ware, and R.~Tedrake, ``{NanoMap: Fast,
  Uncertainty-Aware Proximity Queries with Lazy Search over Local 3D Data},''
  \emph{IEEE International Conference on Robotics and Automation (ICRA)}, 2
  2018. [Online]. Available: \url{http://arxiv.org/abs/1802.09076}
\BIBentrySTDinterwordspacing

\bibitem{Qin2018VINS-Mono:Estimator}
T.~Qin, P.~Li, and S.~Shen, ``{VINS-Mono: A Robust and Versatile Monocular
  Visual-Inertial State Estimator},'' \emph{IEEE Transactions on Robotics},
  2018.

\bibitem{Forster2016SVO:Systems}
C.~Forster, M.~Gassner, D.~Scaramuzza, Z.~Zhang, and M.~Werlberger, ``{SVO:
  Semidirect Visual Odometry for Monocular and Multicamera Systems},''
  \emph{IEEE Transactions on Robotics}, 2016.

\bibitem{Niener2013Real-timeHashing}
\BIBentryALTinterwordspacing
M.~Nie{\ss}ner, M.~Zollh{\"{o}}fer, S.~Izadi, and M.~Stamminger, ``{Real-time
  3D reconstruction at scale using voxel hashing},'' \emph{ACM Transactions on
  Graphics}, vol.~32, no.~6, pp. 1--11, 11 2013. [Online]. Available:
  \url{http://dl.acm.org/citation.cfm?doid=2508363.2508374}
\BIBentrySTDinterwordspacing

\bibitem{Oleynikova2018SafeVehicles}
\BIBentryALTinterwordspacing
H.~Oleynikova, Z.~Taylor, R.~Siegwart, and J.~Nieto, ``{Safe Local Exploration
  for Replanning in Cluttered Unknown Environments for Micro-Aerial
  Vehicles},'' \emph{IEEE ROBOTICS AND AUTOMATION LETTERS,}, vol.~3, no.~3, pp.
  1474--1481, 2018. [Online]. Available: \url{http://arxiv.org/abs/1710.00604
  http://arxiv.org/abs/1710.00604}
\BIBentrySTDinterwordspacing

\bibitem{Vespa2018EfficientMapping}
\BIBentryALTinterwordspacing
E.~Vespa, N.~Nikolov, M.~Grimm, L.~Nardi, P.~H.~J. Kelly, and S.~Leutenegger,
  ``{Efficient Octree-Based Volumetric SLAM Supporting Signed-Distance and
  Occupancy Mapping},'' \emph{IEEE Robotics and Automation Letters}, vol.~3,
  no.~2, pp. 1144--1151, 4 2018. [Online]. Available:
  \url{http://ieeexplore.ieee.org/document/8255617/}
\BIBentrySTDinterwordspacing

\bibitem{Gargantini1982LinearObjects}
\BIBentryALTinterwordspacing
I.~Gargantini, ``{Linear octtrees for fast processing of three-dimensional
  objects},'' \emph{Computer Graphics and Image Processing}, vol.~20, no.~4,
  pp. 365--374, 12 1982. [Online]. Available:
  \url{https://www.sciencedirect.com/science/article/pii/0146664X82900582}
\BIBentrySTDinterwordspacing

\bibitem{Schauwecker2014RobustVision}
\BIBentryALTinterwordspacing
K.~Schauwecker and A.~Zell, ``{Robust and efficient volumetric occupancy
  mapping with an application to stereo vision},'' in \emph{2014 IEEE
  International Conference on Robotics and Automation (ICRA)}.\hskip 1em plus
  0.5em minus 0.4em\relax IEEE, 5 2014, pp. 6102--6107. [Online]. Available:
  \url{http://ieeexplore.ieee.org/document/6907758/}
\BIBentrySTDinterwordspacing

\bibitem{Oleynikova2017Voxblox:Planning}
\BIBentryALTinterwordspacing
H.~Oleynikova, Z.~Taylor, M.~Fehr, R.~Siegwart, and J.~Nieto, ``{Voxblox:
  Incremental 3D Euclidean Signed Distance Fields for on-board MAV planning},''
  in \emph{2017 IEEE/RSJ International Conference on Intelligent Robots and
  Systems (IROS)}.\hskip 1em plus 0.5em minus 0.4em\relax IEEE, 9 2017, pp.
  1366--1373. [Online]. Available:
  \url{http://ieeexplore.ieee.org/document/8202315/}
\BIBentrySTDinterwordspacing

\bibitem{Kuffner2000RRT-connect:Planning}
\BIBentryALTinterwordspacing
J.~Kuffner and S.~LaValle, ``{RRT-connect: An efficient approach to
  single-query path planning},'' in \emph{Proceedings 2000 ICRA. Millennium
  Conference. IEEE International Conference on Robotics and Automation.
  Symposia Proceedings (Cat. No.00CH37065)}, vol.~2.\hskip 1em plus 0.5em minus
  0.4em\relax IEEE, 2000, pp. 995--1001. [Online]. Available:
  \url{http://ieeexplore.ieee.org/document/844730/}
\BIBentrySTDinterwordspacing

\bibitem{Karaman2011Sampling-basedPlanning}
\BIBentryALTinterwordspacing
S.~Karaman and E.~Frazzoli, ``{Sampling-based algorithms for optimal motion
  planning},'' \emph{The International Journal of Robotics Research}, vol.~30,
  no.~7, pp. 846--894, 6 2011. [Online]. Available:
  \url{http://journals.sagepub.com/doi/10.1177/0278364911406761}
\BIBentrySTDinterwordspacing

\bibitem{Mellinger2011}
D.~Mellinger and V.~Kumar, ``{Minimum snap trajectory generation and control
  for quadrotors},'' \emph{Proceedings - IEEE International Conference on
  Robotics and Automation}, pp. 2520--2525, 2011.

\bibitem{Liniger2015Optimization-basedCars}
\BIBentryALTinterwordspacing
A.~Liniger, A.~Domahidi, and M.~Morari, ``{Optimization-based autonomous racing
  of 1:43 scale RC cars},'' \emph{Optimal Control Applications and Methods},
  vol.~36, no.~5, pp. 628--647, 9 2015. [Online]. Available:
  \url{http://doi.wiley.com/10.1002/oca.2123}
\BIBentrySTDinterwordspacing

\bibitem{Tedrake2010LQR-trees:Verification}
\BIBentryALTinterwordspacing
R.~Tedrake, I.~R. Manchester, M.~Tobenkin, and J.~W. Roberts, ``{LQR-trees:
  Feedback Motion Planning via Sums-of-Squares Verification},'' \emph{The
  International Journal of Robotics Research}, vol.~29, no.~8, pp. 1038--1052,
  7 2010. [Online]. Available:
  \url{http://journals.sagepub.com/doi/10.1177/0278364910369189}
\BIBentrySTDinterwordspacing

\bibitem{Cieslewski2017RapidFlightc}
\BIBentryALTinterwordspacing
T.~Cieslewski, E.~Kaufmann, and D.~Scaramuzza, ``{Rapid exploration with
  multi-rotors: A frontier selection method for high speed flight},'' in
  \emph{2017 IEEE/RSJ International Conference on Intelligent Robots and
  Systems (IROS)}.\hskip 1em plus 0.5em minus 0.4em\relax IEEE, 9 2017, pp.
  2135--2142. [Online]. Available:
  \url{http://ieeexplore.ieee.org/document/8206030/}
\BIBentrySTDinterwordspacing

\bibitem{Gao2018FlyingEnvironments}
\BIBentryALTinterwordspacing
F.~Gao, W.~Wu, W.~Gao, and S.~Shen, ``{Flying on point clouds: Online
  trajectory generation and autonomous navigation for quadrotors in cluttered
  environments},'' \emph{Journal of Field Robotics}, 12 2018. [Online].
  Available: \url{http://doi.wiley.com/10.1002/rob.21842}
\BIBentrySTDinterwordspacing

\bibitem{Quigley2009ROS:System}
M.~Quigley, B.~Gerkey, K.~Conley, J.~Faust, T.~Foote, J.~Leibs, E.~Berger,
  R.~Wheeler, and A.~Ng, ``{ROS: an open-source Robot Operating System},'' in
  \emph{Proc. of the IEEE Intl. Conf. on Robotics and Automation (ICRA)
  Workshop on Open Source Robotics}, 2009.

\bibitem{Karaman2012High-speedForest}
\BIBentryALTinterwordspacing
S.~Karaman and E.~Frazzoli, ``{High-speed flight in an ergodic forest},'' in
  \emph{2012 IEEE International Conference on Robotics and Automation}.\hskip
  1em plus 0.5em minus 0.4em\relax IEEE, 5 2012, pp. 2899--2906. [Online].
  Available: \url{http://ieeexplore.ieee.org/document/6225235/}
\BIBentrySTDinterwordspacing

\bibitem{Furrer2016RotorSAFramework}
\BIBentryALTinterwordspacing
F.~Furrer, M.~Burri, M.~Achtelik, and R.~Siegwart, ``{RotorS—A Modular Gazebo
  MAV Simulator Framework},'' in \emph{Robot Operating System (ROS): The
  Complete Reference (Volume 1)}.\hskip 1em plus 0.5em minus 0.4em\relax
  Springer, Cham, 2016, pp. 595--625. [Online]. Available:
  \url{http://link.springer.com/10.1007/978-3-319-26054-9_23}
\BIBentrySTDinterwordspacing

\end{thebibliography}

\end{document}